\newtheorem{theorem}{Theorem}
\renewcommand\footnotetextcopyrightpermission[1]{} % removes footnote with conference information in first column
\title[AAMAS-2024 Formatting Instructions]{Boosting Continuous Control with Consistency Policy}
\author{Yuhui Chen}
\affiliation{
  \institution{Institute of Automation, Chinese Academy of Sciences}
  \city{BEIJING}
  \country{CHINA}}
\affiliation{
  \institution{School of Artificial Intelligence, University of Chinese Academy of Sciences}
  \city{BEIJING}
  \country{CHINA}}
\email{chenyuhui2022@ia.ac.cn}
\author{Haoran Li}
\affiliation{
  \institution{Institute of Automation, Chinese Academy of Sciences}
  \city{BEIJING}
  \country{CHINA}}
\affiliation{
  \institution{School of Artificial Intelligence, University of Chinese Academy of Sciences}
  \city{BEIJING}
  \country{CHINA}}
\email{lihaoran2015@ia.ac.cn}
\author{Dongbin Zhao}
\affiliation{
  \institution{Institute of Automation, Chinese Academy of Sciences}
  \city{BEIJING}
  \country{CHINA}}
\affiliation{
  \institution{School of Artificial Intelligence, University of Chinese Academy of Sciences}
  \city{BEIJING}
  \country{CHINA}}
\email{dongbin.zhao@ia.ac.cn}
\begin{abstract}
Due to its training stability and strong expression, the diffusion model has attracted considerable attention in offline reinforcement learning. However, several challenges have also come with it: 1) The demand for a large number of diffusion steps makes the diffusion-model-based methods time inefficient and limits their applications in real-time control; 2) How to achieve policy improvement with accurate guidance for diffusion model-based policy is still an open problem. Inspired by the consistency model, we propose a novel time-efficiency method named Consistency Policy with Q-Learning (CPQL), which derives action from noise by a single step. By establishing a mapping from the reverse diffusion trajectories to the desired policy, we simultaneously address the issues of time efficiency and inaccurate guidance when updating diffusion model-based policy with the learned Q-function. We demonstrate that CPQL can achieve policy improvement with accurate guidance for offline reinforcement learning, and can be seamlessly extended for online RL tasks. Experimental results indicate that CPQL achieves new state-of-the-art performance on 11 offline and 21 online tasks, significantly improving inference speed by nearly 45 times compared to Diffusion-QL. Code is available at https://github.com/cccedric/cpql.
\end{abstract}
\newcommand{\BibTeX}{\rm B\kern-.05em{\sc i\kern-.025em b}\kern-.08em\TeX}
\begin{document}

%%% The following commands remove the headers in your paper. For final 
%%% papers, these will be inserted during the pagination process.

\pagestyle{fancy}
\fancyhead{}

%%% The next command prints the information defined in the preamble.

\maketitle 

%%%%%%%%%%%%%%%%%%%%%%%%%%%%%%%%%%%%%%%%%%%%%%%%%%%%%%%%%%%%%%%%%%%%%%%%

\section{Introduction}
\noindent Offline reinforcement learning (RL) offers a promising approach for learning policies from pre-collected datasets to solve sequential decision-making tasks, but it requires conservative behaviors to alleviate value overestimation \cite{lange2012batch, FangZGZ22}. The policy representation needs to be powerful enough to cover the diverse behaviors, thereby alleviating the value function overestimation caused by querying out-of-distribution (OOD) actions \cite{fujimoto2019off, levine2020offline}. Traditional unimodal policy struggles to model multi-modal behaviors in datasets, leading to the failure of policy constraints \cite{wang2022diffusion, chen2022offline, pearce2023imitating}.

The diffusion model \cite{ho2020denoising} showcases remarkable attributes in terms of high training stability and its ability to provide strong distributional representations, achieving impressive outcomes in the generation of high-quality image samples endowed with diverse features. In light of the imperative demand for strong expression ability, the diffusion model is introduced to offline RL for imitating behaviors \cite{hsiang2023diffusion, chi2023visuomotor, li2023crossway}, trajectories modeling \cite{janner2022planning, ajay2022conditional, rempo2023trace}, and building expected policy \cite{wang2022diffusion, lu2023contrastive, zhong2023guided}. Since the diffusion model can cover diverse behaviors \cite{yang2023policy} and enhance the KL divergence constraint between the behavior policy and the expected policy \cite{wang2022diffusion}, these methods achieve impressive performance by alleviating value overestimation with reducing OOD actions.

However, the adoption of the diffusion model also presents two prominent challenges. Firstly, the training and inference of the diffusion-model-based policy is time inefficient. This is because the inherent sampling process of the diffusion model relies on a Markov chain for a large number of steps (e.g., 1,000 steps) to capture intricate dependencies in the data. This inadequacy translates into protracted training durations and sluggish real-time inference capabilities. The ramification of this time inefficiency severely hinders practical utility in real-time decision-making domains such as robot control. Secondly, it is hard to improve the parameterized policy by diffusion model accurately under the actor-critic framework. Due to the absence of data from the optimal behavior, updating diffusion model-based policy needs additional guidance to achieve better behavior. Though the Q-value is employed to guide the reverse diffusion process \cite{wang2022diffusion, kang2023efficient}, it is theoretically impossible to access the desired policy since the Q-value is inaccurate for intermediate diffusion actions. How to achieve accurate guidance for the expected policy is still an open problem \cite{lu2023contrastive}. 

Instead of a large number of reverse diffusion steps, the consistency model \cite{song2023consistency} is based on probability flow ordinary differential equation (ODE) and achieves the one-step generation process. Inspired by the consistency model, we propose a novel approach called Consistency Policy with Q-learning (CPQL) that can generate action directly from noise in a single step. Due to this one-step generation, CPQL significantly outperforms previous diffusion-model-based methods in training and inference speed. By establishing a mapping from the reverse diffusion trajectories to the desired policy, CPQL avoids explicit access to inaccurate guidance functions in multi-step diffusion processes, and we theoretically prove that it can achieve policy improvement with accurate guidance for offline RL. 
% We also propose a surrogate loss of consistency loss for stabilizing policy training. 
Moreover, with the inherent sampling randomness of the stochastic generation process, CPQL can be seamlessly extended for online RL tasks without relying on additional exploration strategies.

In summary, our contribution involves three main aspects. Firstly, we propose a novel time-efficient method named CPQL and improve training and inference speeds by nearly 15x and 45x compared to Diffusion-QL \cite{wang2022diffusion}, while also improving the performance. Secondly, we conduct a theoretical analysis to demonstrate that the CPQL is capable of achieving policy improvement with accurate guidance and propose an empirical loss to replace consistency loss for stabilizing consistency policy training. Finally, CPQL can seamlessly extend to online RL tasks. As experimented, CPQL achieves state-of-the-art (SOTA) performance on 11 offline and 21 online tasks.

%%%%%%%%%%%%%%%%%%%%%%%%%%%%%%%%%%%%%%%%%%%%%%%%%%%%%%%%%%%%%%%%%%%%%%%%

\section{Related Work}
\paragraph{Offline RL}
The main problem offline RL faces is the overestimation of the value function caused by accessing OOD actions. The previous offline RL methods broadly are categorized as policy constraints \cite{fujimoto2019off, fujimoto2021minimalist, wu2019behavior}, value function regularization \cite{AlgaeDICE2019Nachum, kumar2020conservative}, or in sample learning \cite{kostrikov2021offline, ma2021offline, xiao2023sample, xu2023offline}. Our work belongs to policy constraints. Different from the previous method based on unimodal distribution, we propose a sampling efficient policy based on the consistency model, which is better suited to constrain the policy to meet the multi-modal characteristics of the offline dataset.

\paragraph{Diffusion Models for Imitation Learning}
When expert trajectories are accessible, imitation learning is a powerful method for building the expected policy. The diffusion model is employed to tackle diverse behaviors in the expert dataset \cite{pearce2023imitating, chi2023diffusion, li2023crossway}. When the reward for the typical task is sparse or inaccessible, goal-conditioned imitation learning is an alternative solution, and the diffusion model can be used to build the goal-conditioned policy \cite{reuss2023goal}. Moreover, since the diffusion model has a strong text-to-image ability, it is also used to generate the behavior goal with the language as the input \cite{kapelyukh2023dallebot, gao2023can}.

\paragraph{Diffusion Models for RL}
The diffusion model has been widely used in RL, especially offline RL tasks. The diffusion model can be used to model trajectories \cite{janner2022planning, ajay2022conditional}, build the world model \cite{brehmer2023egdi}, augment the dataset \cite{lu2023synthetic} and estimate the action distribution conditioned on the state. This distribution can be either a behavioral policy distribution in the dataset \cite{chen2022offline, pearce2023imitating, hansen2023idql} or an expected policy distribution \cite{wang2022diffusion, kang2023efficient, lu2023contrastive}. Difffusion-QL \cite{wang2022diffusion} and EDP \cite{kang2023efficient} are similar to our work. There are several differences between our work and these methods. First and foremost, they did not clearly point out what distribution the diffusion model fits, while our work starts with modeling the solution of the constrained policy search problem and proposes the consistency policy. Second, these methods are damaged by inaccurate guiding during the reverse diffusion process. Finally, a large number of reverse diffusion steps is necessary for these methods and harm real-time for robot control. QGPO \cite{lu2023contrastive} also focuses on solving the problem of diffusion model fitting the optimal solution. Our work is different in that we solve this problem from the perspective of establishing ODE trajectories mapping rather than the perspective of accurate guidance function during the diffusion process. In addition to offline tasks, DIPO \cite{yang2023policy} is first proposed to use the diffusion model to solve online RL problems. This work proposes the action gradient to update the actions in the replay buffer and uses the diffusion model to fit the updated action distribution. Our approach directly updates the consistency policy using the gradient of the value function. 

%%%%%%%%%%%%%%%%%%%%%%%%%%%%%%%%%%%%%%%%%%%%%%%%%%%%%%%%%%%%%%%%%%%%%%%%

\section{Diffusion policy for offline RL}
\subsection{Offline RL}
\label{sec: offline rl}
A decision-making problem in RL is usually represented by a Markov Decision Process (MDP), defined as a tuple: $\{S, A, P, r, \gamma\}$. $S$ and $A$ represent the state and action spaces, respectively. $P(s_{t+1}|s_t,a_t)$ represents the transition probability from state $s_t$ to next state $s_{t+1}$ after taking the action $a_t$, and $r(s_t,a_t,s_{t+1})$ represents the corresponding reward. $ \gamma$ is the discount factor. A policy $\pi(a_t|s_t)$ describes how an agent interacts with the environment. And we use subscripts $t\in \{1,\cdots,T\}$ to denote trajectory timesteps.

\paragraph{Policy Constraint for Offline Learning}
Without interactions with the environment, agents under the offline RL scenario learn an expected policy entirely from a previously collected static dataset of transitions denoted as $\mathcal{D}_{offline}=\{(s_t,a_t,r_t,s_{t+1})\}$ by a behavior policy $\mu(a_t|s_t)$. In order to alleviate the value overestimation caused by visiting OOD actions \cite{KumarFSTL19}, offline RL methods \cite{levine2020offline} often constrain the learned policy to the behavior policy $\mu$. The goal is to find the desired policy around behavior policy that can achieve the following objective:
\begin{equation}
    \begin{aligned}
    \mathcal{J}(\pi)=\mathbb{E}_{s_t\sim \mathcal{D}_{offline}}[\mathbb{E}_{a_t\sim\pi(\cdot|s_t)}[Q_{\upsilon}(s_t,a_t)] \\ 
    -\lambda D_{KL}(\pi(\cdot|s_t)||\mu(\cdot|s_t))]
    \end{aligned}
    \label{eq:batch constrained RL}
\end{equation}
\noindent where $Q_{\upsilon}(s_t, a_t)$ is a learned Q-function of the current policy $\pi$, and $\lambda$ determines the relative importance of the KL divergence against the value function, controlling the balance between exploitation and policy constraints. 

\paragraph{Policy Improvement with Constraint Policy Search}
In the following sections, we omit subscripts $t$ to simplify the demonstrations if they are unnecessary. With the optimization objective claimed in Eq. (\ref{eq:batch constrained RL}), we have the closed form of the solution $\pi^*$ \cite{chen2022offline, awr2019peng, PetersMA10}:
\begin{equation}
\begin{aligned}
\label{eq:optimal policy}
    \pi^*(a|s)\propto\mu(a|s)\exp(\frac{1}{\lambda} Q_{\upsilon}(s,a))
\end{aligned}
\end{equation}

The aforementioned solution is frequently employed to tackle offline RL problems. These methods typically explicitly estimate the probability distribution of the behavioral policy to compute the analytical form of Eq. (\ref{eq:optimal policy}). However, estimating the probability distribution of the behavioral policy becomes exceedingly challenging when the policy is complex. Therefore, diffusion models are often utilized to model this behavioral policy, which is straightforward due to the availability of a dataset for the behavioral policy. Consequently, the aforementioned diffusion model-based approaches are incapable of estimating the analytical form of Eq. (\ref{eq:optimal policy}), necessitating additional importance sampling for the computation of the policy corresponding to Eq. (\ref{eq:optimal policy}) \cite{Wonjoon2022arq, hansen2023idql, chen2022offline}. Furthermore, in these methods, it is usually possible to only estimate the Q-value of the behavioral policy; thus, such methods are also referred to as one-step bootstrapping. In this paper, we will employ a diffusion model to directly model $\pi^*$, thereby avoiding reliance on Q-value and resampling techniques during the inference process, and achieving improved performance by implementing multi-step bootstrapping.

%%%%%%%%%%%%%%%%%%%%%%%%%%%%%%%%%%%%%%%%%%%%%%%%%%%%%%%%%%%%%%%%%%%%%%%%

\subsection{Diffusion Policy}
In this section, we define the expected policy of Eq.(\ref{eq:optimal policy}) as diffusion policy and describe how to use the diffusion model to build it. Before formally introducing this process, we first declare that there are two different types of timesteps in the following sections. Rather than using subscripts $t$ denoting the trajectory timesteps, we use superscripts $k\in [0, K]$ to denote the diffusion timesteps. 

The diffusion model starts by diffusing $p_{data}(a)$ with a stochastic differential equation (SDE): 
\begin{equation}
    da^k=\mu(a^k, k)dk+\sigma(k)dw^k
\end{equation}

\noindent where $\mu(\cdot,\cdot)$ and $\sigma (\cdot)$ are the drift and diffusion coefficients respectively, and $\{w^k\}_{k\in[0,K]}$ denotes the standard Brownian motion. Starting from $x^K$, the diffusion model aims to recover the original data $x^0$ by solving a reverse process from $K$ to $0$ with the Probability Flow (PF) ODE \cite{song2020score}:
\begin{equation}
    da^k=[\mu(a^k,k)-\frac{1}{2}\sigma(k)^2\nabla \log p^k(a^k)]dk
\end{equation}
\noindent where the only unknown term is the score function $\nabla \log p^k(a^k)$ at each timestep. Thus, the diffusion model trains a neural network parameterized by $\phi$ to estimate the score function: $s_{\phi}(a^k,k)\approx \nabla \log p^k(a^k)$. By setting $\mu(a^k,k)=0$ and $\sigma(k)=\sqrt{2}$, we can obtain an empirical estimate of the PF ODE:
\begin{equation}
    \frac{da^k}{dk}=-ks_{\phi}(a^k,k)
    \label{eq:f}
\end{equation}

When the diffusion model is used to represent the expected policy directly, the challenge lies in estimating the score function because it's impossible to obtain data from the optimal policy straightforwardly, making the estimation intractable. Nevertheless, we can use the diffusion process through reference policy with guidance to estimate the score function \cite{dhariwal2021diffusion}. With Eq. (\ref{eq:optimal policy}), we can derive the following score function with guidance for the solution of Eq. (\ref{eq:optimal policy}):
\begin{equation}
    \label{eq:optimal score}
    \nabla\log\pi^*(a^k|s) = \nabla\log\mu(a^k|s)+\frac{1}{\lambda}\nabla Q_{\upsilon}(s,a^k)
\end{equation}

Note that we have to calculate the Q-function for each diffusion timestep. However, we only have the learned Q-value at diffusion timestep $0$, and it is challenging to accurately estimate the Q-value for each diffusion step. Consequently, achieving the correct policy through accurate diffusion guidance is nontrivial \cite{lu2023contrastive}.

%%%%%%%%%%%%%%%%%%%%%%%%%%%%%%%%%%%%%%%%%%%%%%%%%%%%%%%%%%%%%%%%%%%%%%%%

\section{Consistency Policy with Q-Learning}
\subsection{Consistency Policy}
The previous section notes that modeling the solution using the classifier-guided diffusion process is quite challenging due to the inaccuracy of guidance. 
In this section, we will show how the consistency policy can avoid this problem and achieve policy improvement with accurate guidance. Drawing inspiration from the consistency model, we could directly map the ODE trajectories to the policy in the inverse diffusion process. Following the consistency model \cite{song2023consistency}, we define the consistency policy:
\begin{equation}
    \begin{aligned}
    \pi_{\theta}(a|s) &\equiv f_{\theta}(a^k,k|s) \\
                          &= c_{skip}(k)a^k+c_{out}(k)F_{\theta}(a^k,k|s) \\
    \end{aligned}
\label{eq:sample}
\end{equation}

\begin{figure}[htbp]
\centering
\includegraphics[width=8cm]{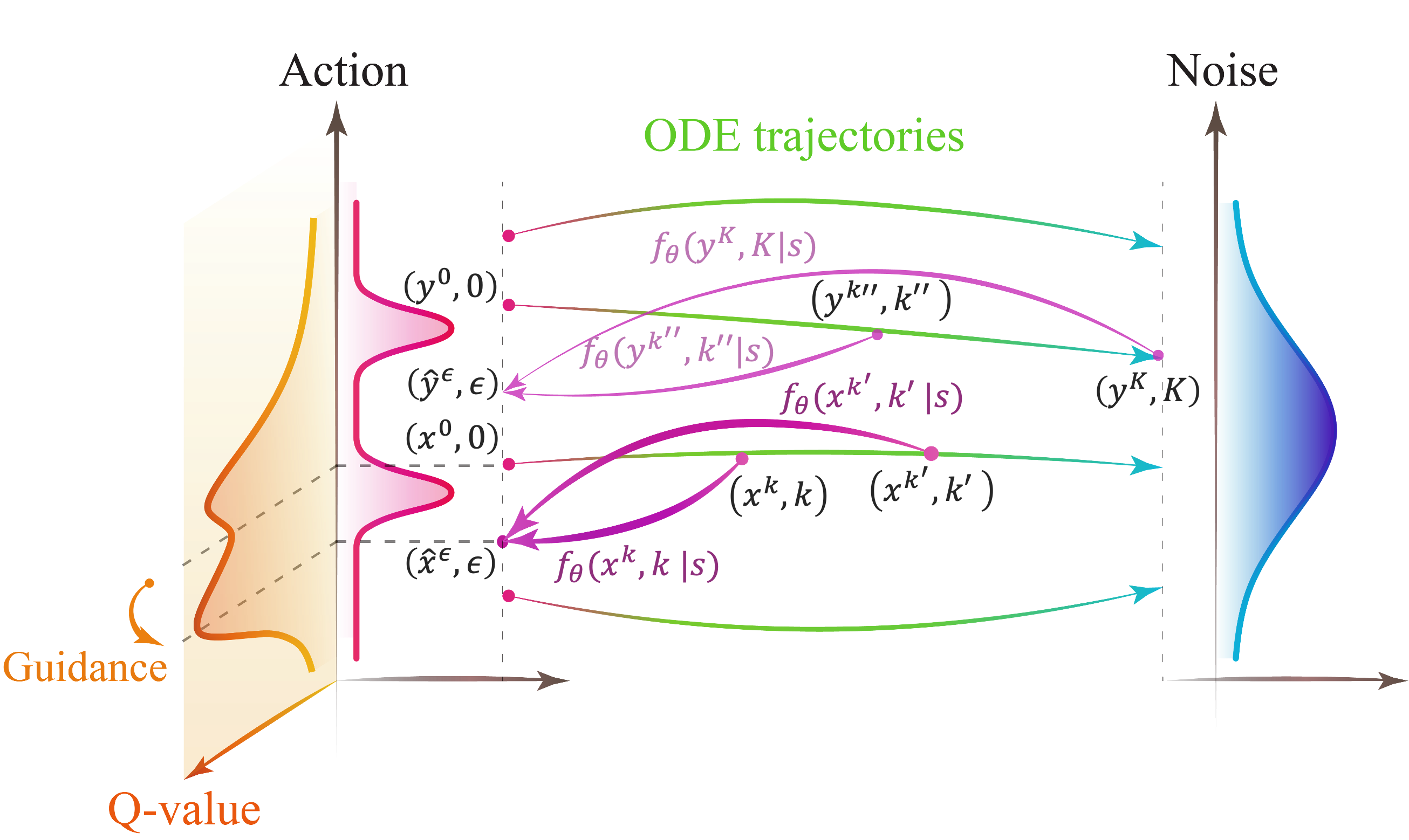}
\caption{Given an ODE that smoothly converts from actions of the reference policy (e.g., $x^0, y^0\in A$) to Gaussian noises, the consistency policy $f_{\theta}$ maps any point (e.g., $x^{k}, x^{k'}, y^{k''}, y^K$) on the PF ODE trajectory to the desired actions (e.g., $\hat{x}^{\epsilon}, \hat{y}^{\epsilon}$). Since consistency policy generates the actions from the noise by one step, it reduces an enormous amount of time for policy training and inference.}
\label{fig:cmql}
\vspace{-0.5em}
\end{figure}

\noindent where $a^k \sim \mathcal{N}(0, kI)$. The $F_{\theta}(a^k,k|s)$ is a trainable network that takes the state $s_t$ as an condition and outputs an action of the same dimensionality as the input $a^k$. $c_{skip}$ and $c_{out}$ are differentiable functions such that $c_{skip}(\epsilon)=1$, and $c_{out}(\epsilon)=0$ to ensure the consistency policy is differentiable at $k=\epsilon$ if $F_{\theta}(a^k,k|s)$ is differentiable, which is critical for training process described later. We stop solving the reverse process at $k=\epsilon$, where $\epsilon$ is a small positive constant to avoid numerical instability. The sampled action $\hat{a}^{\epsilon} \sim \pi_{\theta}(a|s)$ of the consistency policy is used for controlling the agent. The relationship between the forward diffusion process represented by the ODE trajectories and consistency policy is shown in Figure \ref{fig:cmql}.

\subsection{Training Loss for Consistency Policy}
Training consistency policy with consistency loss from the consistency model is nontrivial. Let us begin by assuming the presence of a pre-trained diffusion model with a score function, denoted as $s_{\phi^*}(a^k,k)$ with parameter $\phi^*$, representing the optimal diffusion-model-based policy. Following the consistency model, we can use a consistency policy to distill the inverse diffusion process. However, since we cannot access the data from the optimal policy, the assumptive pre-trained diffusion model is unavailable.

To tackle the above problem, we propose CPQL, which establishes a relationship between the above distillation process and the consistency training for reference policy with the Q-function based on Theorem \ref{th:final loss with ct}, with full proof in Appendix \ref{apx:train loss}. In order to determine the solution trajectory of action $\{a_t^k\}_{k\in[\epsilon, K]}$, we discretize the diffusion horizon $[\epsilon, K]$ into $M$ sub-intervals with boundaries $k_1 =\epsilon<k_2 <\cdots<k_M =K$. Note that here we use subscripts $m \in \{1, \cdots, M\}$ to denote time sub-intervals. To learn the consistency policy, we minimize the objective with stochastic gradient descent on the parameter $\theta$, while updating $\theta^{-}$ with exponential moving average.

\begin{theorem}
    \label{th:final loss with ct}
    Let $\Delta k=\max_{m\in [1, M-1]}\{|k_{m+1}-k_m|\}$. We have the assumptions: 1) Distance function $d$, value function $Q$ and $f_{\theta^{-}}$ are all twice continuously differentiable with bounded second derivatives; 2) There is a pre-trained score function representing the desired policy: $\forall k \in[\epsilon, K]: s_{\phi^*}(a^k,k)=\nabla\log p^k(a^k)$, which cannot be accessed; 3) $f_{\theta}$ satisfies the Lipschitz condition: there exists $L>0$ such that for all $k\in [\epsilon, K]$ and $x,y\in A$, we have $||f_{\theta}(x,k)-f_{\theta}(y,k)||_2\leq L||x-y||_2$. 
    The distillation loss for the distillation process of training the consistency policy is defined as:
    \begin{equation}
    \begin{aligned}
        L_{CD}(\theta,\theta^{-};\phi^*)= \mathbb{E}[d(f_{\theta}(a+k_{m+1}z,k_{m+1} | s),f_{\theta^{-}}(\hat{a}_{\phi^*}^{k_m},k_m|s))]
    \end{aligned}
    \end{equation}
    \noindent where $\hat{a}_{\phi^*}^{k_m}$ is calculated with Euler solver and the optimal score function $s_{\phi^*}(a^k,k)$. The training loss for training the consistency policy is sufficient to replace the distillation loss:
    \begin{equation}
    \begin{aligned}
    \label{eq:final loss}
        L(\theta,\theta^-)=\alpha L_{CT}(\theta,\theta^-)-\beta\mathbb{E}[ Q(s, \hat{a}^{\epsilon})]+o(\Delta k)
    \end{aligned}
    \end{equation}
    \noindent where the consistency loss for consistency training is defined as:
    \begin{equation}
    \begin{aligned}
        L_{CT}(\theta,\theta^{-}) = \mathbb{E}[d(f_{\theta}(a+k_{m+1}z,k_{m+1}|s),f_{\theta^{-}}(a+k_{m}z,k_{m}|s)]
    \end{aligned}
    \end{equation}
    \noindent and
    \begin{equation}
    \begin{aligned}
        \alpha=\mathbb{E}&[1+Q(s, a)\frac{k_{m+1}^2}{\lambda(a^{k_{m+1}}-a)^2}]\\
        \beta=\mathbb{E}&[d(f_{\theta}(a+k_{m+1}z,k_{m+1}|s),f_{\theta^-}(a+k_{m+1}z,k_{m+1}|s))\\
        &*\frac{k_{m+1}^2}{\lambda(a^{k_{m+1}}-a)^2})]\\
    \end{aligned}
    \label{eq:alpha beta}
    \end{equation}
\end{theorem}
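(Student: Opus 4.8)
The plan is to start from the distillation loss $L_{CD}(\theta,\theta^{-};\phi^{*})$ — which is meaningful only through the \emph{assumed} optimal score $s_{\phi^{*}}$ of Assumption~2 — and to rewrite it entirely in terms of quantities that are actually available, namely the reference policy $\mu$ and the learned value $Q_{\upsilon}$, tracking every error down to order $o(\Delta k)$. First I would combine Assumption~2 with the guidance identity Eq.~(\ref{eq:optimal score}) to split the optimal score at every noise level as $s_{\phi^{*}}(a^{k},k)=\nabla\log\mu(a^{k}|s)+\tfrac{1}{\lambda}\nabla_{a^{k}}Q_{\upsilon}(s,a^{k})$, and substitute this into the forward-Euler step of Eq.~(\ref{eq:f}) from $k_{m+1}$ to $k_m$ that defines $\hat a^{k_m}_{\phi^{*}}$. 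This expresses $\hat a^{k_m}_{\phi^{*}}$ as a ``reference-transport'' point plus a ``$Q$-guidance correction'' of size $O(|k_{m+1}-k_m|)=O(\Delta k)$.

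For the reference-transport part I would run the consistency-training argument of \cite{song2023consistency}: replace the exact reference score $\nabla\log\mu(a^{k_{m+1}}|s)$ by the one-sample estimator $-(a^{k_{m+1}}-a)/k_{m+1}^{2}$ built from $a\sim\mu(\cdot|s)$, $a^{k_{m+1}}=a+k_{m+1}z$. With this substitution the reference-transport point collapses exactly to $a+k_m z$, the target appearing in $L_{CT}(\theta,\theta^{-})$; and because the estimator is unbiased conditionally on $a^{k_{m+1}}$ while $d$, $Q$ and $f_{\theta^{-}}$ have bounded second derivatives (Assumption~1) and $f_{\theta}$ is Lipschitz (Assumption~3), the error of the replacement is $o(\Delta k)$ after taking expectations, exactly as in the proof of Theorem~2 of \cite{song2023consistency}.

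Next I would treat the $Q$-guidance correction, which is $O(\Delta k)$: Taylor-expand $f_{\theta^{-}}$ about $a+k_m z$, then expand $d$ about its consistency-training value $d\big(f_{\theta}(a+k_{m+1}z,k_{m+1}|s),f_{\theta^{-}}(a+k_m z,k_m|s)\big)$, and discard the $O(\Delta k^{2})$ tail. The expectation over optimal-process trajectories (equivalently, over $a\sim\pi^{*}$) is turned into an expectation over the reference process by the self-normalized re-weighting $\pi^{*}/\mu\propto\exp(Q_{\upsilon}(s,a)/\lambda)$; linearizing this weight is what produces the $Q$-dependent multiplicative factors, and combining it with the first-order guidance coupling, the boundary conditions $c_{skip}(\epsilon)=1$, $c_{out}(\epsilon)=0$, and a summation over the sub-intervals $k_1<\dots<k_M$ telescopes the directional-derivative-of-$Q$ contributions into the plain value term $\mathbb{E}[Q(s,\hat a^{\epsilon})]$. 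Regrouping the surviving pieces yields $\alpha\,L_{CT}(\theta,\theta^{-})-\beta\,\mathbb{E}[Q(s,\hat a^{\epsilon})]$ with $\alpha,\beta$ as in Eq.~(\ref{eq:alpha beta}), and every discarded contribution is $o(\Delta k)$ uniformly in $m$, hence so is its $m$-average.

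I expect the last step to be the main obstacle: proving that the first-order guidance terms genuinely telescope into the single clean term $\beta\,\mathbb{E}[Q(s,\hat a^{\epsilon})]$ instead of leaving a residual sum of derivative couplings spread along the ODE trajectory, and simultaneously establishing that the per-interval remainders are $o(\Delta k)$ \emph{uniformly} in $m$, so that averaging over the $M=O(1/\Delta k)$ sub-intervals preserves the rate. This is exactly where the regularity hypotheses (Assumptions~1 and~3) are needed, in the same spirit as the error analysis of the consistency model but now carrying the additional $Q$-guidance layer.
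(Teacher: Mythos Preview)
Your decomposition into a reference-transport piece plus a $Q$-guidance correction, and your handling of the reference piece via Theorem~2 of \cite{song2023consistency}, match the paper exactly. The divergence is entirely in how the $Q$-guidance correction becomes $-\beta\,\mathbb{E}[Q(s,\hat a^{\epsilon})]$, and here your telescoping proposal is a genuine gap.

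The paper does \emph{not} reweight by $\pi^{*}/\mu$, and does \emph{not} telescope across sub-intervals. After the Taylor expansion leaves a term (their $\mathcal{B}$) proportional to $\nabla_{a^{k_{m+1}}}Q(s,a^{k_{m+1}})$, the paper (i) replaces this gradient by the first-order finite difference $(Q(s,a)-Q(s,a^{k_{m+1}}))/(a-a^{k_{m+1}})$; (ii) invokes the identity from \cite{lu2023contrastive} that the noised value satisfies $Q(s,a^{k_{m+1}})=\log\mathbb{E}_{p_{data}(a\mid a^{k_{m+1}})}[e^{Q(s,a)}]=0$, which kills the intermediate-timestep term outright; (iii) approximates the surviving clean-data value $Q(s,a)$ by $Q(s,\hat a^{\epsilon})$; and (iv) reverse-Taylor-expands the derivative couplings back into differences of $d$-evaluations, which is exactly where the factor $k_{m+1}^{2}/(a^{k_{m+1}}-a)^{2}$ in $\alpha,\beta$ arises. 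Every step is carried out at a \emph{single} random $m$; nothing is summed along the trajectory.

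Your own worry about the telescoping is therefore well-placed but misdirected: since $m\sim\mathcal{U}[1,M-1]$ is a single draw, the directional-derivative-of-$Q$ contributions at different $m$ are never simultaneously present and cannot cancel. The paper's substitute for that missing cancellation is precisely the QGPO zeroing identity in step~(ii), which you do not invoke; without it, and without the $a\approx\hat a^{\epsilon}$ substitution of step~(iii), your route does not arrive at the clean $\mathbb{E}[Q(s,\hat a^{\epsilon})]$ term, and the importance-weight linearization alone will not produce the specific $\alpha,\beta$ of Eq.~(\ref{eq:alpha beta}).
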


The expectation is taken with respect to $(s,a)\sim \mathcal{D}_{offline}, \hat{a}^{\epsilon}\sim\pi_{\theta}, m\sim \mathcal{U}[1, M-1], z\sim \mathcal{N}(0,I)$, and $d$ stands for the Euclidean distance $d(x,y)=||x-y||^2_2$. 
Theorem \ref{th:final loss with ct} demonstrates that we can improve policy by distilling the assumed optimal score function using behavioral policy data and the Q-function.
Eq. (\ref{eq:final loss}) provides the loss function for training the consistency policy. $ L_{CT}(\theta,\theta^-)$ represents the consistent training loss regarding the behavior policy. Unlike consistency models, this part includes $ \alpha$ related to the Q-value as the weight to improve the probability of good actions sampled by the policy. The second term is the Q-value of the estimated action. Its purpose is to guide the policy to generate higher Q-value, leading to better performance. As seen from the above loss function, estimating the Q-value in the intermediate diffusion steps is no longer necessary, which avoids inaccurate guidance.

As experienced, we find that the loss function mentioned above may cause instability during the training process, resulting in the deterioration of policy performance. 
We speculate that the reason may be that consistency loss needs to sufficiently sample the diﬀusion time interval to achieve consistency with the original data. However, in practice, the limited sampling steps and the randomness of sampling may not meet this condition. In order to achieve more stable training, we propose a simpler loss function called reconstruction loss: 
\begin{equation}
    \begin{aligned}
    \label{eq:recon loss}
    L_{RC}(\theta,\theta^{-}) = \mathbb{E}[d(f_{\theta}(a+k_{m+1}z,k_{m+1}|s),a)]
    \end{aligned}
\end{equation}

This loss is intuitive. It drives the consistency policy to directly recover the original action, rather than indirectly achieving the recovery by maintaining consistency. We prove that this loss function has the same convergence objective with $L_{CT}(\theta,\theta^-)$ in Appendix \ref{apx:recon loss}. Therefore, we have the final optimization objective:
\begin{equation}
\begin{aligned}
    \label{eq:update}
    L(\theta,\theta^-)=&\alpha L_{RC}(\theta,\theta^-)\\
    &-\frac{\eta}{\mathbb{E}_{(s,a)\sim \mathcal{D}_{offline}}[Q(s,a)]}\mathbb{E}_{s\sim \mathcal{D}_{offline}, \hat{a}^{\epsilon}\sim\pi_{\theta}}[Q(s, \hat{a}^{\epsilon})]
\end{aligned}
\end{equation}

\begin{algorithm}[t]
    \caption{CPQL and CPIQL for Offline RL Tasks} 
    \begin{algorithmic}
        \STATE Initialize the policy network $\pi_{\theta}$, critic networks $V_{\psi}$, $Q_{\upsilon_1}$ and $Q_{\upsilon_2}$, and target networks $\pi_{\theta^{-}}$ , $Q_{\upsilon_1^{-}}$ and $Q_{\upsilon_2^{-}}$
        \FOR{each iteration}
            \STATE Sample transition batch $B=\{(s_t,a_t,r_t,s_{t+1})\}\sim \mathcal{D}_{offline}$
            \STATE \textcolor{gray!90}{\# Q-learning}
            \STATE Update $Q_{\upsilon_i}$ by Eq. (\ref{eq:q}) \textcolor{gray!90}{(CPQL)}
            \OR Update $V_{\psi}$, $Q_{\upsilon_i}$ by Eq. (\ref{eq:qv-v}) and Eq. (\ref{eq:qv-q}) \textcolor{gray!90}{(CPIQL)}\\
            \STATE \textcolor{gray!90}{\# Consistency policy learning}
            \STATE Sample $\hat{a}^{\epsilon}_t\sim\pi_{\theta}(a_t|s_t)$ by Eq. (\ref{eq:sample})
            \STATE Update $\pi_{\theta}$ by minimizing Eq. (\ref{eq:update})
            \STATE \textcolor{gray!90}{\# Update target networks}
            \STATE $\theta^{-}=\rho\theta^{-}+(1-\rho)\theta$
            \STATE $\upsilon_{i}^{-}=\rho\upsilon_{i}^{-}+(1-\rho)\upsilon_{i}$ for $i\in\{1,2\}$
        \ENDFOR
    \end{algorithmic} 
    \label{alg:offline}
\end{algorithm}
\noindent where $\frac{\eta}{\mathbb{E}_{(s,a)\sim \mathcal{D}_{offline}}[Q(s,a)]}$ corresponds to the $\beta$ in Eq. (\ref{eq:alpha beta}). In practice, we set $\alpha$ as an adjustable parameter ignoring the $Q$ value in $\alpha$, and $\eta$ is another adjustable parameter. The $(\alpha, \eta)$ are set according to the characteristics of different domains. 

\begin{table*}[ht]
  \centering
  \caption{The performance of CPQL and CPIQL and SOTA baselines on \textsf{D4RL} Locomotion and Adroit tasks. For Diffusion-QL, CPQL, and CPIQL, we here provide the "best" scores, representing the best performance during training. More experimental results, including the final performance and the standard deviation of each task, are provided in Appendix \ref{apx:more d4rl}. The bold values are the highest among each row.}
  \scalebox{0.95}{
  \begin{tabular}{lccccccccc|cccc}
    \toprule
    Dataset                      & TD3+BC & IQL  & SfBC  & IDQL  & Diffusion-QL   & EDP   & QGPO  & Diffuser & DD    & CPIQL          & CPQL           \\
    \bottomrule
    \toprule
    halfcheetah-medium-v2        & 48.3   & 47.4  & 45.9  & 49.7  & 51.1          & 52.1  & 54.1  & 42.8     & 49.1  & 55.3           & \textbf{57.9}  \\
    hopper-medium-v2             & 59.3   & 66.3  & 57.1  & 63.1  & 90.5          & 81.9  & 98.0  & 74.3     & 79.3  & {101.5}        & \textbf{102.1} \\
    walker2d-medium-v2           & 83.7   & 78.3  & 77.9  & 80.2  & 87.0          & 86.9  & 86.0  & 79.6     & 82.5  & {88.4}         & \textbf{90.5}  \\
    halfcheetah-medium-replay-v2 & 44.6   & 44.2  & 37.1  & 45.1  & 47.8          & 49.4  & 47.6  & 37.7     & 39.3  & \textbf{49.8}  & 48.1           \\
    hopper-medium-replay-v2      & 60.9   & 94.7  & 86.2  & 82.4  & 101.3         & 101.0 & 96.9  & 93.6     & 100.0 & \textbf{101.7} & \textbf{101.7} \\
    walker2d-medium-replay-v2    & 81.8   & 73.9  & 65.1  & 79.8  & \textbf{95.5} & 94.9  & 84.4  & 70.6     & 79.0  & 95.0           & 94.4           \\
    halfcheetah-medium-expert-v2 & 90.7   & 86.7  & 92.6  & 94.4  & 96.8          & 95.5  & 93.5  & 88.9     & 90.6  & 90.2           & \textbf{98.8}  \\
    hopper-medium-expert-v2      & 98.0   & 91.5  & 108.6 & 105.3 & 111.1         & 97.4  & 108.0 & 103.3    & 111.8 & 113.4          & \textbf{114.2} \\
    walker2d-medium-expert-v2    & 110.1  & 109.6 & 109.8 & 111.6 & 110.1         & 110.2 & 110.7 & 106.9    & 108.8 & \textbf{112.3} & 111.5          \\
    \midrule
    \rowcolor{gray!20} Average   & 75.3   & 77.0  & 75.6  & 79.1  & 87.9          & 85.5  & 86.5  & 77.5     & 81.8  & {89.7}         & \textbf{91.0}   \\
    \bottomrule
    \toprule
    pen-human-v1                 & 0.6    & 71.5  & -     & -     & 72.8          & 48.2  & -     & -        & -     & 58.2           & \textbf{89.3}  \\
    pen-cloned-v1                & -2.5   & 37.3  & -     & -     & 57.3          & 15.9  & -     & -        & -     & 77.4           & \textbf{83.3}  \\
    \midrule
    \rowcolor{gray!20} Average   & -1.0   & 54.4  & -     & -     & 65.1          & 32.1  & -     & -        & -     & 67.8           & \textbf{86.3}   \\
    \bottomrule
  \end{tabular}
  }
  \label{tab:offline scores}
\end{table*}

\subsection{Policy Evaluation}
For policy evaluation, the objective function uses the KL divergence to constrain the learned policy from accessing the OOD action. The Q-function of CPQL is learned in a conventional way, with the Bellman operator \cite{lillicrap2015continuous} and the double Q-learning trick \cite{hasselt2010double}. We build two Q-networks $Q_{\upsilon_1}$ and $Q_{\upsilon_2}$ and target networks $Q_{\upsilon_1^{-}}$ and $Q_{\upsilon_2^{-}}$ and minimizing the objective:
\begin{equation}
    \begin{aligned}
    L_{Q}(\upsilon_i)=&\mathbb{E}_{(s_t,a_t,s_{t+1})\sim \mathcal{D}_{offline}, a_{t+1}^{\epsilon}\sim \pi_{\theta}}[||r(s_t,a_t)\\
    &+\gamma \min_{j=1,2}Q_{\upsilon_j^{-}}(s_{t+1},a_{t+1}^{\epsilon})-Q_{\upsilon_i}(s_t,a_t)||^2]
    \end{aligned}
\label{eq:q}
\end{equation}

To evaluate the effectiveness of the consistency policy in constraining the sampling of OOD actions that cause overestimation of the Q-value, we further propose Consistency Policy Implicit Q-Learning (CPIQL) as a comparison. Using implicit Q-learning \cite{kostrikov2021offline} can better restrict accessing the OOD action. Meanwhile, implicit Q-learning avoids sampling the action at the next timestep when calculating the critic loss function, reducing the time needed for critic training. CPIQL builds the value-network $V_{\psi}$ in addition. Typically, the value function $Q$ and $V$ are given by:
\begin{equation}
    L_{V}(\psi)=\mathbb{E}_{(s_t,a_t)\sim \mathcal{D}_{offline}}[L_2^\tau(\min_{j=1,2}Q_{\upsilon_j^{-}}(s_t,a_t)-V_{\psi}(s_t))]
\label{eq:qv-v}
\end{equation}
\begin{equation}
    \begin{aligned}
    L_{Q}(\upsilon_i)=\mathbb{E}_{(s_t,a_t,s_{t+1})\sim \mathcal{D}_{offline}}[||r(s,a)+\gamma V_{\psi}(s_{t+1})-&Q_{\upsilon_i}(s_t,a_t)||_2^2], \\
    &for\, i=1,2
    \label{eq:qv-q}
    \end{aligned}
\end{equation}
where $L_2^\tau(u)=|\tau-\mathds{1}(u<0)|u^2$ is expectile regression function.

In contrast to other RL methods based on the diffusion model, the consistency policy affords the distinct advantage of facilitating one-step sampling. This feature significantly reduces the time costs associated with both the sampling process and the computation of Q-function gradient back-propagation, resulting in better time efficiency. We summarize the complete algorithm procedure of CPQL and CPIQL in Algorithm \ref{alg:offline} for offline tasks.

\subsection{Extension for Online RL}
Previous sessions have demonstrated how CPQL learns the policy of Eq. (2) to solve offline RL problems. In this session, we will demonstrate that the CPQL method can seamlessly extend to online RL problems. We define the objective for online tasks as:
\begin{equation}
\begin{aligned}
    \mathcal{J}(\pi)=\mathbb{E}_{s_t\sim \mathcal{D}_{r}}[\mathbb{E}_{a_t\sim\pi(\cdot|s_t)}[Q_{\upsilon}(s_t,a_t)] \\
    -\lambda D_{KL}(\pi(\cdot|s_t)||\pi_r(\cdot|s_t))]
\label{eq:online obj}
\end{aligned}
\end{equation}
\noindent where $\mathcal{D}_{r}$ refers to the replay buffer, and $\pi_r$ is the policy for collecting the data in $\mathcal{D}_{r}$. 
When $\pi_r$ is the uniform policy, the above problem is equivalent to maximum entropy RL \cite{haarnoja2018soft}.
It is readily observable that the objective function for online RL tasks bears a striking resemblance to Eq. (\ref{eq:batch constrained RL}), and Eq. (\ref{eq:optimal policy}) can also serve as a closed-form solution for Eq. (\ref{eq:online obj}) with $\mathcal{D}_{r}$ replacing $\mathcal{D}_{offline}$ and $\pi_r$  replacing $\mu$. Therefore, CPQL can be seamlessly extended to online tasks. In maximum entropy RL methods, careful tuning of $\lambda$ is necessary to strike a balance between exploration and exploitation. 
% Our approach deviates from this setup, with the reference policy being set to the behavioral policy collecting experience in the replay buffer. 
In CPQL, this situation is alleviated. 
Firstly, our consistency policy is inherently stochastic, and we find in experiments that this inherent randomness allows for sufficient exploration of the environment without the need for additional exploration strategies. Secondly, the data collected by this policy incorporates this stochasticity and makes the consistency policy maintain this randomness through Eq. (\ref{eq:update}). Lastly, as the policy asymptotically converges, the proportion of good samples in the data increases, reducing the randomness in the replay buffer, consequently decreasing the policy's randomness and achieving policy convergence. It is worth noting that the aforementioned process does not require manual adjustment of $\lambda$; rather, it is implicit in the CPQL policy iteration. Furthermore, unlike previous maximum entropy reinforcement learning methods, we find that CPQL's performance is not sensitive to $\lambda$ for different tasks, greatly alleviating the tuning complexity associated with maximum entropy reinforcement learning. We summarize the complete algorithm procedure of CPQL in Algorithm \ref{alg:online} for online tasks, and we also provide an illustrative description of how CPQL works for both offline and online RL tasks in Appendix \ref{apx: policy}.

\begin{algorithm}[t]
    \caption{CPQL for Online RL Tasks} 
    \begin{algorithmic}
        \STATE Initialize the policy network $\pi_{\theta}$ , critic networks $Q_{\upsilon_1}$ and $Q_{\upsilon_2}$ , and target networks $\pi_{\theta^{-}}$ , $Q_{\upsilon_1^{-}}$ and $Q_{\upsilon_2^{-}}$
        \STATE Initialize the dataset $\mathcal{D}_r\leftarrow\emptyset$
        \STATE \textcolor{gray!90}{\# Warm up}
        \FOR{$i \in 0, \cdots, W$}
            \STATE Generate $\hat{a}^{\epsilon}_t$ by Eq. (\ref{eq:sample}) with $s_t$
            \STATE Play $\hat{a}^{\epsilon}_t$ and get $s_{t+1}\sim\mathds{P}(\cdot|s_t,\hat{a}^{\epsilon}_t)$
            \STATE $\mathcal{D}_r\leftarrow\mathcal{D}_r\cup\{s_t, \hat{a}^{\epsilon}_t, r_t, s_{t+1}\}$
        \ENDFOR
        \FOR{each iteration}
            \STATE \textcolor{gray!90}{\# Update the dataset}
            \STATE Sample $\hat{a}^{\epsilon}_t\sim\pi_{\theta}(a_t|s_t)$ by Eq. (\ref{eq:sample})  with $s_t$
            \STATE Play $\hat{a}^{\epsilon}_t$ and get $s_{t+1}\sim\mathds{P}(\cdot|s_t,\hat{a}^{\epsilon}_t)$
            \STATE $\mathcal{D}_r\leftarrow\mathcal{D}_r\cup\{s_t, \hat{a}^{\epsilon}_t, r_t, s_{t+1}\}$
            \STATE \textcolor{gray!90}{\# Policy training}
            \STATE Sample transition batch $B=\{(s_{t},a_{t},r_{t},s_{t+1})\}\sim \mathcal{D}_r$
            \STATE \textcolor{gray!90}{\# Q-learning}
            \STATE Update $Q_{\upsilon_i}$ by Eq. (\ref{eq:q}) % with $ \mathcal{D}_r$ % replacing $\mathcal{D}_{offline}$
            \STATE \textcolor{gray!90}{\# Consistency policy learning}
            \STATE Update policy by minimizing Eq. (\ref{eq:update}) % with $ \mathcal{D}_r$ % replacing $\mathcal{D}_{offline}$
            \STATE \textcolor{gray!90}{\# Update target networks}
            \STATE $\theta^{-}=\rho\theta^{-}+(1-\rho)\theta$
            \STATE$\upsilon_{i}^{-}=\rho\upsilon_{i}^{-}+(1-\rho)\upsilon_{i}$ for $i\in\{1,2\}$
        \ENDFOR
    \end{algorithmic} 
    \label{alg:online}
\end{algorithm}

%%%%%%%%%%%%%%%%%%%%%%%%%%%%%%%%%%%%%%%%%%%%%%%%%%%%%%%%%%%%%%%%%%%%%%%%

\begin{table*}[ht]
  \centering
  \caption{The performance of CPQL and SOTA baselines on \textsf{dm\_control} tasks under 500K environment steps. The results of MPO, DMPO, D4PG, and DreamerV3 are from the paper of DreamerV3 \cite{hafner2023mastering}. The bold values are the highest among each row.}
  \vspace{-1em}
  \scalebox{0.97}{
  \begin{tabular}{lccccccc|c}             \\
    \toprule
    Tasks                      & TD3              & SAC     & PPO             & MPO   & DMPO  & D4PG           & DreamerV3 & CPQL \\
    \bottomrule
    \toprule
    Acrobot Swingup            & 46.8             & 33.2    & 34.4            & 80.6  & 98.5  & 125.5          & 154.5     & \textbf{183.1} \\
    Cartpole Balance           & 982.6            & 961.9   & 997.6           & 958.4 & 998.5 & 998.8          & 990.5     & \textbf{999.2} \\
    Cartpole Balance Sparse    & 992.0            & 993.5   & \textbf{1000.0} & 998.0 & 994.0 & 979.6          & 996.8     & \textbf{1000.0}\\
    Cartpole Swingup           & 829.2            & 781.4   & 760.7           & 857.7 & 857.8 & \textbf{874.6} & 850.0     & 860.7 \\
    Cheetah Run                & 546.2            & 530.9   & 560.4           & 612.3 & 581.6 & 623.5          & 575.9     & \textbf{727.6} \\
    Finger Spin                & 847.1            & 825.4   & 369.9           & 766.9 & 744.3 & 818.4          & 937.2     & \textbf{965.4} \\
    Finger Turn Easy           & 337.6            & 371.4   & 275.2           & 430.4 & 593.8 & 524.5          & 745.4     & \textbf{874.1} \\
    Finger Turn Hard           & 334.4            & 344.8   & 5.06            & 250.8 & 384.5 & 379.2          & 841.0     & \textbf{864.6} \\
    Hopper Hop                 & 40.0             & 41.7    & 0.0             & 37.5  & 71.5  & 67.5           & 111.0     & \textbf{130.1} \\
    Hopper Stand               & 322.7            & 270.9   & 2.2             & 279.3 & 519.5 & 755.4          & 573.2     & \textbf{902.1} \\
    Reacher Easy               & 968.8            & 973.4   & 541.6           & 954.4 & 965.1 & 941.5          & 947.1     & \textbf{981.2} \\
    Reacher Hard               & \textbf{965.7}   & 928.2   & 518.0           & 914.1 & 956.8 & 932.0          & 936.2     & 963.6 \\
    Walker Run                 & 274.5            & 445.9   & 131.7           & 539.5 & 462.9 & 593.1          & 632.7     & \textbf{683.8} \\
    Walker Stand               & 957.7            & 973.4   & 528.4           & 960.4 & 971.6 & 935.2          & 956.9     & \textbf{983.6} \\
    Walker Walk                & 934.7            & 922.8   & 408.9           & 924.9 & 933.1 & \textbf{965.1} & 935.7     & 952.6 \\
    \midrule
    \rowcolor{gray!20} Average & 625.3            & 626.6   & 408.9           & 637.7 & 675.6 & 700.9          & 745.6     & \textbf{804.9} \\
    \bottomrule
  \end{tabular}
  }
  \label{tab:online scores dm control}
\end{table*}

\section{Experiments}
In this section, we conduct several experiments on the \textsf{D4RL} benchmark \cite{fu2020d4rl}, \textsf{dm\_control} tasks \cite{tunyasuvunakool2020}, \textsf{Gym MuJoCo} tasks \cite{todorov2012physics} to evaluate the performance and time efficiency of the consistency policy. We also provide various ablation studies for a better understanding of how the hyperparameter $(\alpha, \eta)$ in Eq. (\ref{eq:update}) and new loss Eq. (\ref{eq:recon loss}) affect the performance. Throughout this paper, the results are reported by averaging five random seeds. For a detailed look at the experimental setup and corresponding hyperparameters, please refer to Appendix \ref{apx: exp setup}.

\paragraph{Baselines for Offline RL Tasks}
For offline scenario, we evaluate on two domains of \textsf{D4RL} benchmark and compare CPQL with current methods that achieve SOTA performance, including Q-learning with policy constraints such as TD3+BC \cite{fujimoto2021minimalist}, implicit Q-learning such as IQL \cite{kostrikov2021offline}, and diffusion-model-based methods, such as Diffuser \cite{janner2022planning}, DecisionDiffuser (DD) \cite{ajay2022conditional}, SfBC \cite{chen2022offline}, IDQL \cite{hansen2023idql}, Diffusion-QL\cite{wang2022diffusion}, EDP \cite{kang2023efficient}, QGDO \cite{lu2023contrastive}. Notably, Diffuser and DD employ the diffusion model to capture trajectory dynamics, while the remaining methods focus on modeling policy distributions.

\paragraph{Baselines for Online RL Tasks}
For the online scenario, we evaluate our method on several tasks, such as \textsf{dm\_control} tasks and \textsf{Gym MuJoCo} tasks.
We compared CPQL with the current methods of achieving SOTA. These methods include off-policy methods such as TD3 \cite{fujimoto2018addressing}, SAC \cite{haarnoja2018soft}, MPO \cite{abdolmaleki2018maximum}, D4PG \cite{barth2018distributed}, DMPO \cite{abdolmaleki2020distributional}, and on-policy methods such as PPO \cite{schulman2017proximal}. In addition, the comparison methods also include model-based methods such as DreamerV3 \cite{hafner2023mastering}, and diffusion-model-based methods such as DIPO \cite{yang2023policy}.

\subsection{Overall Results}
In this section, we illustrate the overall results on the offline tasks (\textsf{D4RL}) and the online tasks (\textsf{dm\_control}, \textsf{Gym Mujoco}), showing that CPQL achieves competitive performance on both offline and online tasks compared with current SOTA methods. All training curves for CPQL and CPIQL can be found in Appendix \ref{apx:train curve}.

\subsubsection{Results on \textsf{D4RL} (Offline Tasks)}
Firstly, we evaluate the performance of the methods on offline \textsf{D4RL} tasks. The experimental results are shown in Table \ref{tab:offline scores}. From the experimental results, we can see that the methods based on the diffusion model have significant advantages in performance compared with the unimodal distribution policy method (TD3+BC and IQL). Compared with Diffusion-QL and EDP, QGPO achieves excellent performance by using an accurate guided value function with contrastive loss. Our proposed CPQL avoids inaccurate guidance problems by modeling ODE trajectory mapping, thus achieving competitive results by approximately 4\% improvement (compared with Diffusion-QL). By comparing CPQL and CPIQL, we can find that consistency policy can effectively constrain the sampling of OOD actions without the help of implicit Q-learning. Additionally, CPIQL outperforms IQL by a significant 16.5\% on locomotion tasks, indicating that the strong representation ability of consistency policy can effectively achieve policy improvement with accurate diffusion guidance. In conclusion, CPQL achieves better guidance with value function while meeting constraints, thus making their performance outstanding on offline tasks.

\begin{figure*}[htbp]
\centering
\includegraphics[width=14cm]{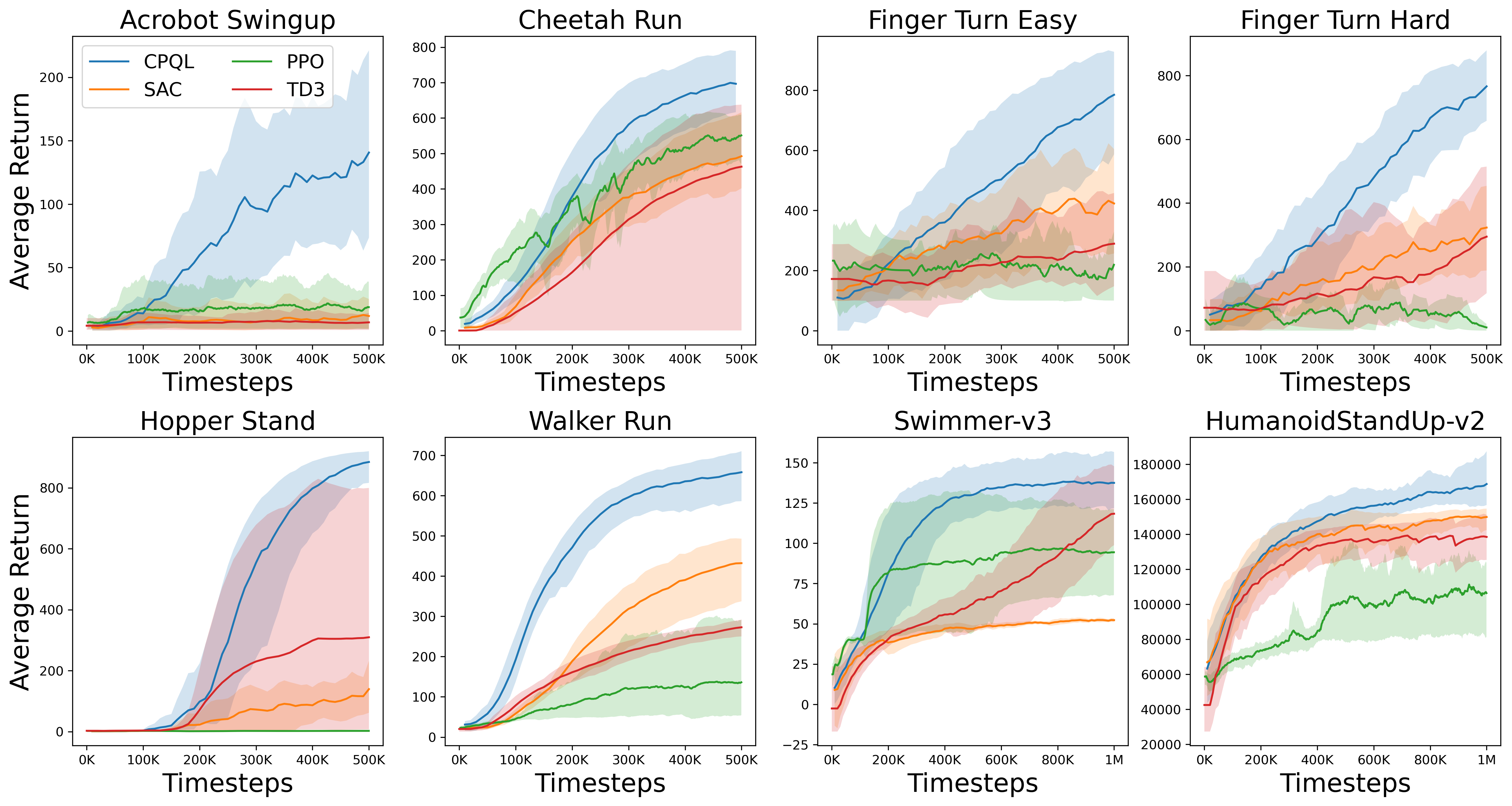}
\caption{Training curves for 8 online tasks, including 6 tasks from \textsf{dm\_control} (top row, and leftmost two of the bottom row) and 2 tasks from \textsf{Gym Mujoco} (rightmost two of the bottom row). CPQL, PPO, SAC and TD3 are compared on each task with 5 random seeds. }
\label{fig:online result}
\end{figure*}

\subsubsection{Results on \textsf{dm\_control} (Online Tasks)}
Next, we evaluate the methods on 15 \textsf{dm\_control} tasks. Part of the training curves and all experimental results are shown in Figure \ref{fig:online result} and Table \ref{tab:online scores dm control}, respectively. As shown in Figure \ref{fig:online result}, CPQL has shown an improvement in both training stability and sample efficiency with better performance when compared to other algorithms under a small number of interactions with the environment (500 K). Also, the experimental results show that CPQL performance outperforms previous SOTA methods, including DreamerV3 on 15 tasks, especially on "Cheetah Run" and "Finger Turn Easy" tasks. On average, there has been an improvement of approximately 8\% as compared to DreamerV3. In addition, CPQL directly models the policy and is not plagued by inaccurate diffusion guidance, resulting in better performance on complex tasks, with great representation ability for the policy. It is worth mentioning that DMPO and D4PG have adopted the distributional value function \cite{BellemareDM17}, which has proved to achieve better performance than the expected value function. While CPQL only uses the expected value function, it has shown significant performance leadership, which also means the importance of exploration in RL, and the consistency model has great potential in RL.

\begin{table}[t]
  \centering
  \caption{The performance of CPQL and SOTA baselines on Gym MuJoCo tasks under 1M environment steps. The bold values are the highest among each row.}
  \vspace{-1em}
  \scalebox{0.77}{
  \begin{tabular}{lcccc|c}             \\
    \toprule
    Tasks                      & TD3      & SAC      & PPO      & DIPO     & CPQL \\
    \bottomrule
    \toprule
    Swimmer-v3                 & 108.3    & 51.7     & 94.6     & 72.2     & \textbf{137.2}    \\
    Walker2d-v3                & 4127.0   & 4631.9   & 3751.8   & 4409.6   & \textbf{5139.9}   \\
    Ant-v3                     & 5421.9   & 5665.5   & 2921.0   & 5620.2   & \textbf{6209.4}   \\
    HalfCheetah-v3             & 10779.9  & 11287.9  & 2449.5   & 10475.2  & \textbf{12195.2}  \\
    Humanoid-v3                & 5253.0   & 4993.3   & 704.1    & 4878.5   & \textbf{5394.6}   \\
    HumanoidStandup-v2         & 136897.9 & 150934.8 & 105654.4 & 145350.2 & \textbf{174480.6} \\
    \bottomrule
  \end{tabular}
  }
  \label{tab:online scores mujoco}
%   \vspace{-1.8em}
\end{table}

\subsubsection{Results on \textsf{Gym MuJoCo} (Online Tasks)}
We also evaluate the methods on 6 tasks of \textsf{Gym MuJoCo}. Part of the training curves and all experimental results are shown in Figure \ref{fig:online result} and Table \ref{tab:online scores mujoco}. Consistent with the conclusion on the \textsf{dm\_control} tasks, the methods based on the diffusion model maintain dominance on multiple RL tasks. As we can see from the rightmost two graphs of the bottom row in Figure \ref{fig:online result}, CPQL has significant performance advantages compared with other algorithms within 1M interactions. It is worth mentioning that DIPO proposed the concept of action gradient and used a diffusion model to fit the replay buffer updated by action gradient. However, on one hand, it will be affected by the coverage of the initial dataset; on the other hand, only fitting the optimal data is likely to cause the diffusion model to lose sampling diversity to reduce its exploration ability. Therefore, CPQL performance is better than DIPO.

\begin{figure}[htbp]
\centering
\includegraphics[width=8cm]{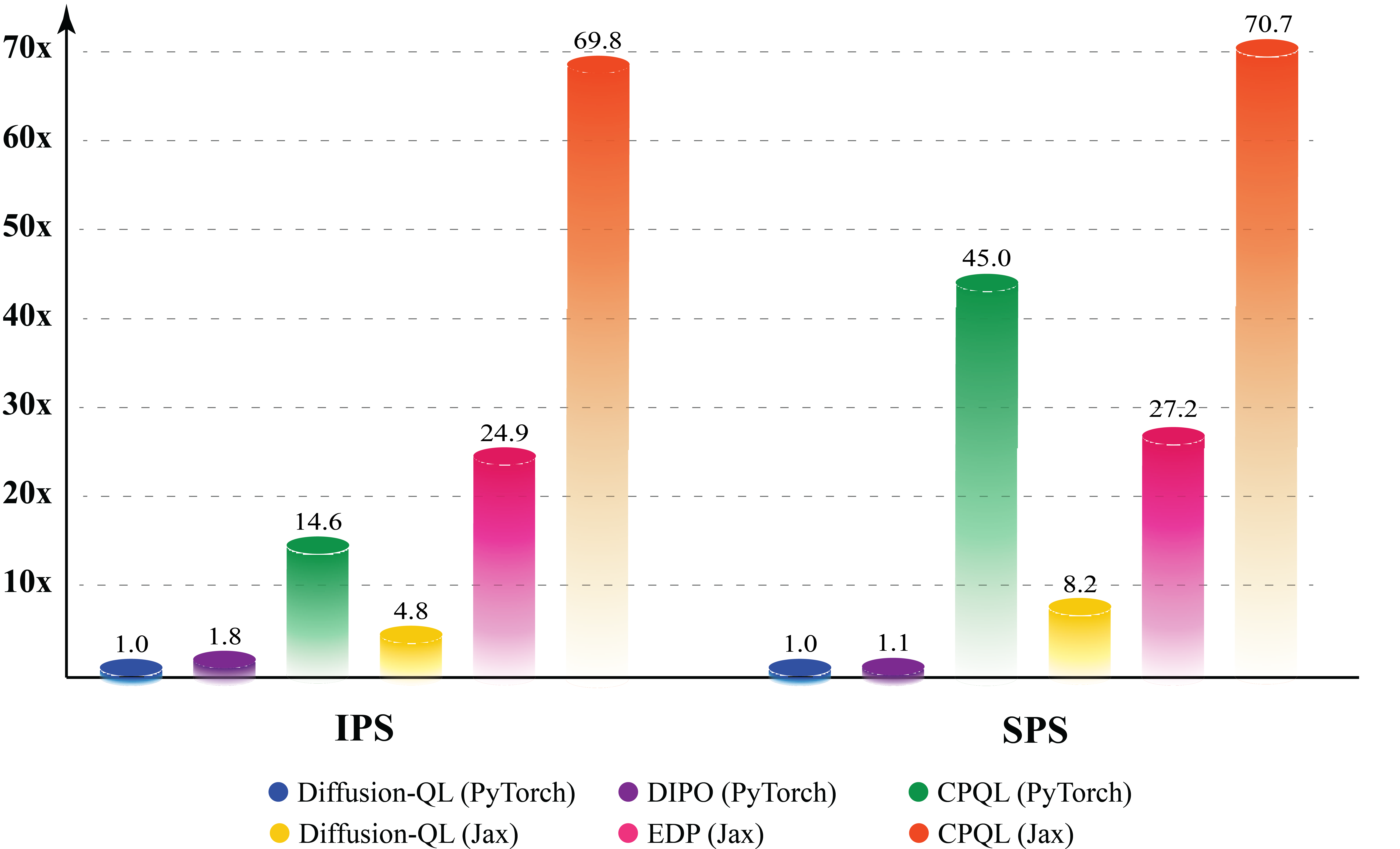}
\caption{Training and inference speedup on \textsf{D4RL} locomotion tasks. We choose the Diffusion-QL (Pytorch) as the baseline with all original data provided in Appendix \ref{apx:more d4rl}. Implementation for Diffusion-QL(Pytorch), DIPO(Pytorch), EDP(Jax) are from the official repository, with websites listed in Appendix \ref{apx: exp setup}. And we implement Diffusion-QL(Jax) by ourselves for comparison.}
\label{fig:ips sps}
% \vspace{-1.5em}
\end{figure}

\subsection{Time Efficiency}
We then evaluate the time efficiency of policy training and inference. Following the evaluation criterion in EDP \cite{kang2023efficient}, we compare methods including Diffusion-QL, EDP, DIPO, and CPQL by iterations-per-second (IPS) for training speed and steps-per-second (SPS) for inference speed. We choose "walker2d-medium-expert-v2" as the testbed and run each method for 100K iterations of policy updates to calculate the IPS. Then, we sample 100K transitions by interacting with the environment to calculate the SPS.

From the result in Figure \ref{fig:ips sps}, CPQL's training and sampling speeds are far faster than other methods based on the diffusion model. Especially compared to Diffusion-QL, CPQL has improved its training speed by nearly 15 times and inference speed by nearly 45 times. Even though EDP also uses one-step diffusion in actor training to approximate the multi-step diffusion process in Diffusion-QL, as well as DPM-solver to speed up sampling, multi-step sampling is still necessary during Q-function training and inference sampling. Hence the consistency policy also has a noteworthy improvement in both the training and inference speed compared to the EDP.

\begin{figure}[t]
\centering
    \subfloat[$\eta=1$ and different $\alpha$ for selected online RL tasks]
    {
        \label{subfig:ablation alpha}
        \includegraphics[width=8cm]{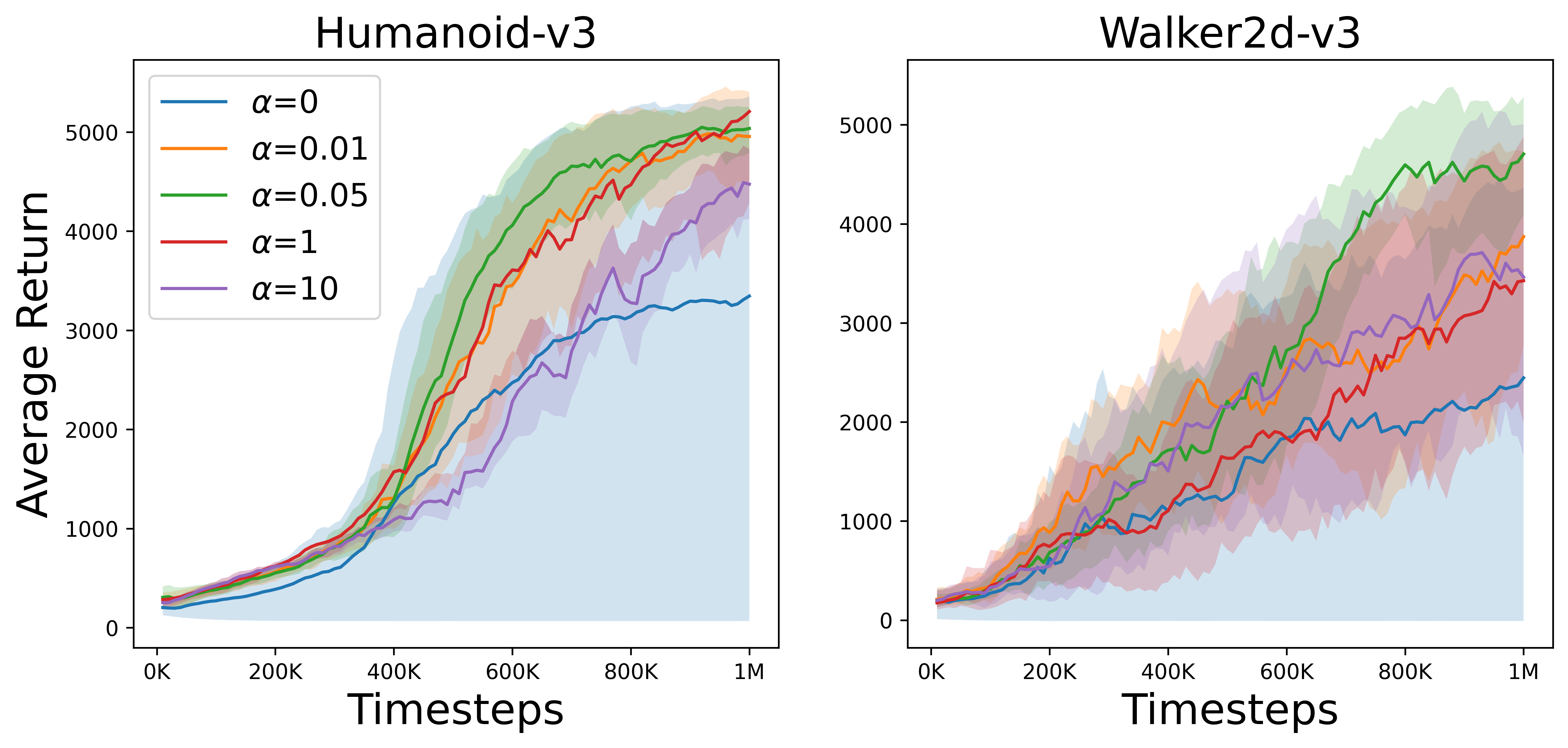}
    }
    \hfill
    \subfloat[$\alpha=1$ and different $\eta$ for selected offline RL tasks] 
    {
        \label{subfig:ablation eta}
        \includegraphics[width=8cm]{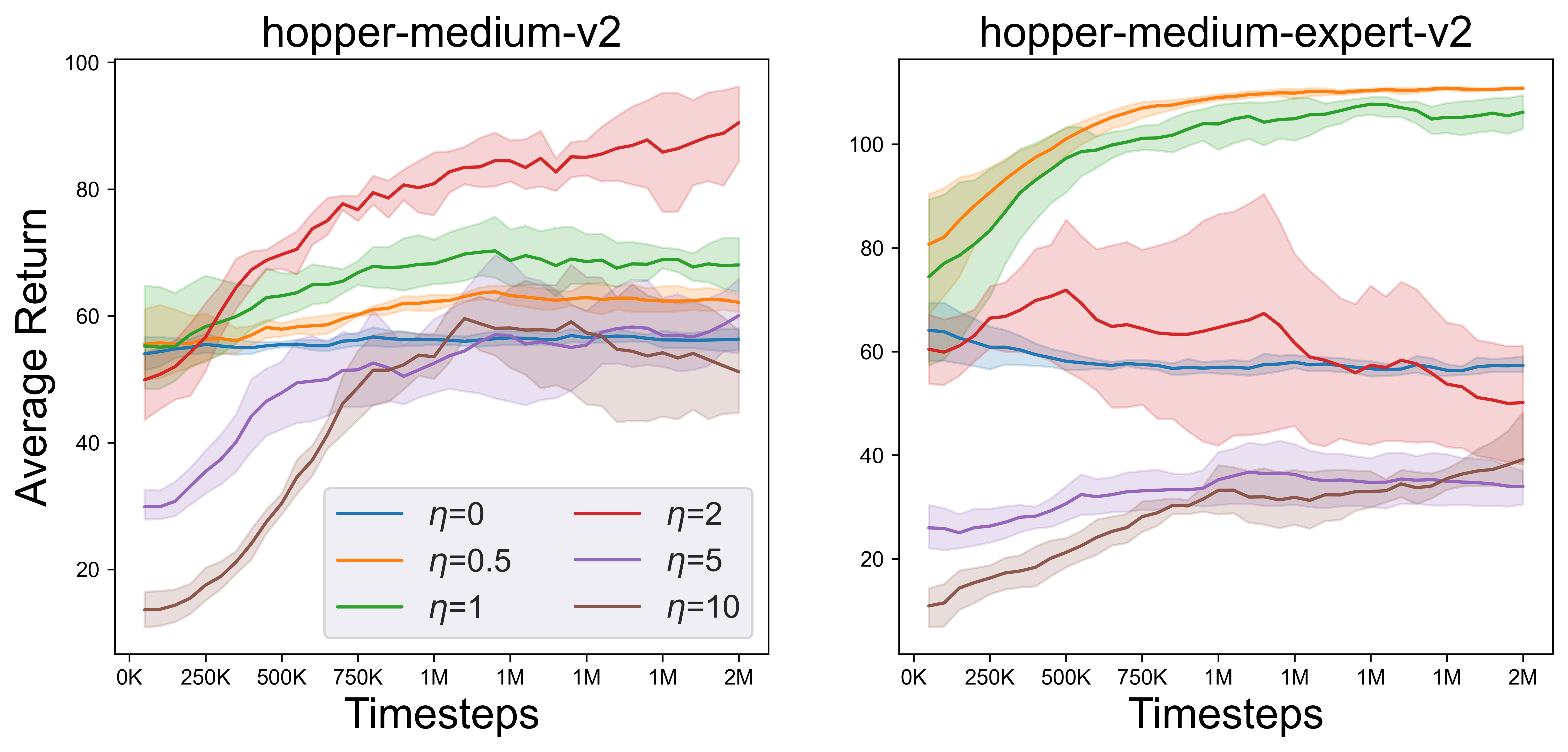}
    }
\caption{The effect of the different hyperparameters $\eta$ and $\alpha$ on the online and offline policy learning.}
\label{fig:ablation eta alpha}
\end{figure}

\subsection{Ablation Study}
In this section, we conduct various ablation studies on both online and offline continuous control tasks to determine the factors affecting the performance, including evaluating the impact of different parameters on the policy training and analyzing the stability introduced by reconstruction loss during consistency policy training.

\paragraph{Hyperparameter}
$\eta$ and $\alpha$ respectively control the impact of the Q-function on policy and the KL divergence between learned policy and reference policy. In online tasks, we set $\eta=1$ and investigate the impact of different $\alpha$ on policy performance. The experimental results are shown in Figure \ref{fig:ablation eta alpha} (a). According to the results of two different tasks, an appropriate $\alpha$ is crucial for obtaining better performance. If $\alpha$ is too small or too large, it will cause performance degradation. But the reasons for performance degradation are different. When $\alpha$ is small, the consistency policy is more affected by the gradient of the Q-function, leading to premature loss of sampling diversity and affecting the exploration ability. When $\alpha$ is relatively large, the data distribution greatly affects the consistency policy, which leads to the policy ignoring the guidance of the Q-function.

In offline tasks, we set $\alpha=1$ and investigate the impact of different $\eta$ on policy performance. The results are shown in Figure \ref{fig:ablation eta alpha} (b). The results of two different tasks indicate that when the $\eta$ ratio is small, the impact of the Q-function decreases, and the performance of the policy tends to approach that of behavioral cloning. When $\eta$ is relatively large, the constraint effect of the dataset on the policy decreases, leading to more OOD actions in the sampling process, resulting in deviation of Q-value estimation, which also leads to performance degradation. 

\begin{figure}[t]
\centering
\includegraphics[width=8cm]{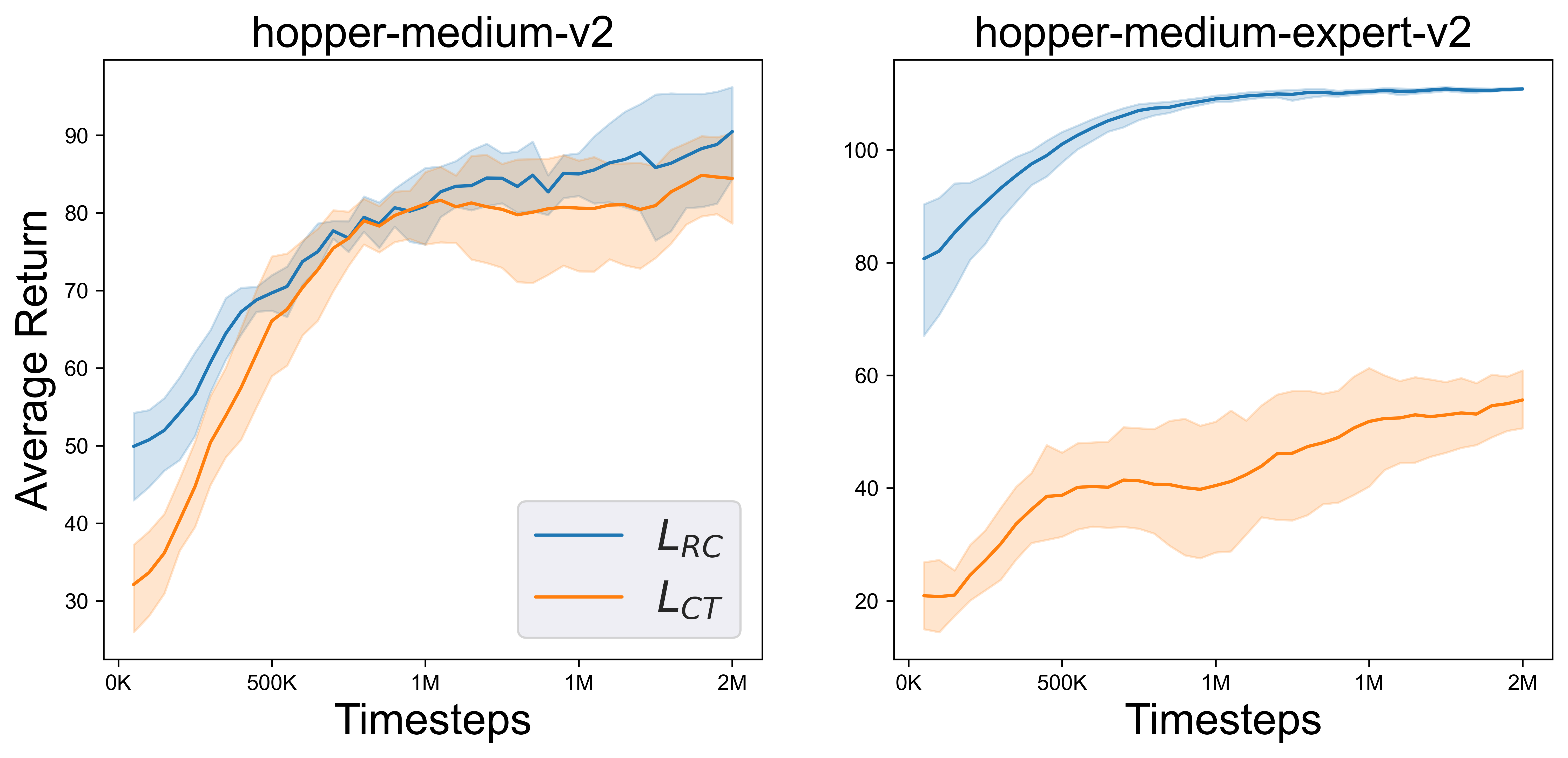}
\caption{The effect of the different training losses on policy learning.}
\vspace{-1.0em}
\label{fig:ablation loss}
\end{figure}

\paragraph{Training Loss for Consistency Model}
We propose replacing consistency training loss $L_{CT}$ with reconstruction loss $L_{RC}$. Here we study the influence of two different loss functions on policy training as shown in Figure \ref{fig:ablation loss}. During the policy training, we find that consistency training loss would lead to policy collapse. The results show that reconstruction loss makes the training process more stable and can achieve better results.

%%%%%%%%%%%%%%%%%%%%%%%%%%%%%%%%%%%%%%%%%%%%%%%%%%%%%%%%%%%%%%%%%%%%%%%%

\section{Conclusion}
In this work, we propose the time-efficiency consistency policy with Q-learning (CPQL), which constructs the mapping from the PF ODE trajectories to the desired policy and achieves policy improvement with accurate guidance for both offline and online RL tasks. We also introduce an empirical loss for stabilizing consistency policy training. Experimental results show that CPQL achieves about 4\% improvement on \textsf{D4RL} locomotion tasks (compared to Diffusion-QL) and 8\% improvement on \textsf{dm\_control} tasks (compared to DreamerV3). Meanwhile, CPQL significantly improves inference speed by nearly 45 times compared to Diffusion-QL. We show the potential of the consistency model in RL and believe this impact is profound. Its efficient sampling speed has greatly expanded real-time applications such as robot control based on the diffusion model. Of course, there are still many problems, especially for online RL. For instance, how to better control the diversity of diffusion model in the training process and whether it is possible to build a multi-step consistency policy or not to enhance the representation of the policy is worth studying.

%%%%%%%%%%%%%%%%%%%%%%%%%%%%%%%%%%%%%%%%%%%%%%%%%%%%%%%%%%%%%%%%%%%%%%%%

%%% The acknowledgments section is defined using the "acks" environment
%%% (rather than an unnumbered section). The use of this environment 
%%% ensures the proper identification of the section in the article 
%%% metadata as well as the consistent spelling of the heading.

\begin{acks}
This work is supported by the National Natural Science Foundation
of China (NSFC) under Grants No. 62136008, No. 62103409, the Strategic Priority Research Program of Chinese Academy of Sciences (CAS) under Grant
XDA27030400 and in part by the International Partnership Program of the
Chinese Academy of Sciences under Grant 104GJHZ2022013GC.
\end{acks}

%%%%%%%%%%%%%%%%%%%%%%%%%%%%%%%%%%%%%%%%%%%%%%%%%%%%%%%%%%%%%%%%%%%%%%%%

%%% The next two lines define, first, the bibliography style to be 
%%% applied, and, second, the bibliography file to be used.

% \bibliographystyle{ACM-Reference-Format} 
% \bibliography{references}

%%% -*-BibTeX-*-
%%% Do NOT edit. File created by BibTeX with style
%%% ACM-Reference-Format-Journals [18-Jan-2012].

\bibliographystyle{ACM-Reference-Format}
% \bibliography{reference}

%%%%%%%%%%%%%%%%%%%%%%%%%%%%%%%%%%%%%%%%%%%%%%%%%%%%%%%%%%%%%%%%%%%%%%%%

\newpage
\onecolumn

\section*{Appendices}
\appendix

\setcounter{equation}{18}
\setcounter{table}{3}
\setcounter{figure}{5}

\section{Mathematical Proofs}
\paragraph{Notations}
For a decision-making problem in RL, we use $s$ and $a$ to represent the state and action, respectively. $p(s'|s,a)$ represents the transition probability from state $ s$ to next state $s'$ after taking the action $a$. We use subscripts $t\in \{1,\cdots,T\}$ to denote trajectory timesteps and we use superscripts $k\in [0, K]$ to denote diffusion timesteps with $m \in \{1, \cdots, M\}$ denoting time sub-intervals. 

Given a multi-variate function $h(x,y)$, we let $\partial_1(x,y)$ denote the Jacobian of $h$ over $x$, and analogously $\partial_2(x,y)$ denote the Jacobian of $h$ over $y$. Unless otherwise stated, $a$ is supposed to be a random variable sampled from the data distribution $p_{data}(a)$, $m$ is sampled uniformly at random from $[1, \cdots, M-1]$.

\paragraph{Preliminaries}
We use $f_{\theta}(a^k,k|s)$ to denote a consistency policy parameterized by $\theta$. The outputs of the consistency policy are consistent for all arbitrary pairs of $(a^k,k)$ on the same PF ODE with an empirical estimate: $\frac{da^k}{dk}=-ks_{\phi}(a^k,k)$ where the pre-trained score function is parameterized by $\phi$.  By utilizing a numerical ODE solver such as an Euler solver, we can accurately estimate $a^{k_m}$ from $a^{k_{m+1}}$, which is denoted as:
\begin{equation}
\begin{aligned}
    \hat{a}_{\phi^*}^{k_m}=a^{k_{m+1}}+(k_m-k_{m+1})s_{\phi^*}(a^{k_{m+1}},k_{m+1})
\end{aligned}
\end{equation}

The distillation loss is defined as:
\begin{equation}
\begin{aligned}
    L_{CD}(\theta,\theta^{-};\phi^*)= \mathbb{E}[d(f_{\theta}(a+k_{m+1}z,k_{m+1} | s),f_{\theta^{-}}(\hat{a}_{\phi^*}^{k_m},k_m|s))]
\end{aligned}
\end{equation}

The consistency loss is defined as:
\begin{equation}
\begin{aligned}
    L_{CT}(\theta,\theta^{-}) = \mathbb{E}[d(f_{\theta}(a+k_{m+1}z,k_{m+1}|s),f_{\theta^{-}}(a+k_{m}z,k_{m}|s)]
\end{aligned}
\end{equation}

\subsection{The Proof of Training Loss for Consistency Policy}
\label{apx:train loss}
\begin{theorem}
    Let $\Delta k=\max_{m\in [1, M-1]}\{|k_{m+1}-k_m|\}$. We have the assumptions: 1) Distance function $d$, value function $Q$ and $f_{\theta^{-}}$ are all twice continuously differentiable with bounded second derivatives; 2) There is a pre-trained score model represents the desired policy: $\forall k \in[\epsilon, K]: s_{\phi^*}(a^k,k)=\nabla\log p^k(a^k)$, which cannot be accessed; 3) $f_{\theta}$ satisfies the Lipschitz condition: there exists $L>0$ such that for all $k\in [\epsilon, K]$, and $x,y\in A$, we have $||f_{\theta}(x,k)-f_{\theta}(y,k)||_2\leq L||x-y||_2$. 
    
    The following training loss for the consistency policy is sufficient to replace the distillation loss:
    \begin{equation}
    \begin{aligned}
        L(\theta,\theta^-)=\alpha L_{CT}(\theta,\theta^-)-\beta\mathbb{E}[ Q(s, \hat{a}^{\epsilon})]+o(\Delta k)
    \end{aligned}
    \end{equation}
    
    \noindent where 
    \begin{equation}
    \begin{aligned}
        \alpha=&\mathbb{E}[1+Q(s,a)\frac{k_{m+1}^2}{\lambda(a^{k_{m+1}}-a)^2}]\\
        \beta=&\mathbb{E}[d(f_{\theta}(a+k_{m+1}z,k_{m+1}|s),f_{\theta^-}(a+k_{m+1}z,k_{m+1}|s))*\frac{k_{m+1}^2}{\lambda(a^{k_{m+1}}-a)^2})]\\
    \end{aligned}
    \end{equation}
\end{theorem}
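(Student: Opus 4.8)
The plan is to reduce the distillation loss $L_{CD}(\theta,\theta^-;\phi^*)$ — which depends on the inaccessible optimal score $s_{\phi^*}$ and the Euler estimate $\hat a_{\phi^*}^{k_m}$ built from it — to a loss expressed only through offline data and the learned $Q$, incurring only an $o(\Delta k)$ error. The entry point is the guidance decomposition of Eq.~(\ref{eq:optimal score}): since $s_{\phi^*}(a^k,k)=\nabla\log p^k(a^k)=\nabla\log\mu(a^k|s)+\tfrac1\lambda\nabla Q(s,a^k)$, the Euler step of the PF ODE $da^k/dk=-k\,s_\phi(a^k,k)$ (Eq.~(\ref{eq:f})) that defines $\hat a_{\phi^*}^{k_m}$ separates additively into the step driven by the behavior score $\nabla\log\mu$ and an extra displacement that, after absorbing the $-k$ factor and integrating the guidance drift $-\tfrac{k}{\lambda}\nabla Q$ down the noise scale, carries a weight of order $\tfrac{k_{m+1}^2}{\lambda}$ against $\nabla Q(s,a^{k_{m+1}})$.

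I would then treat the behavior part exactly as in the consistency-model reduction from distillation to consistency training in \cite{song2023consistency}: along the Gaussian perturbation $a^{k_{m+1}}=a+k_{m+1}z$ the behavior score is replaced by its single-sample estimate $-(a^{k_{m+1}}-a)/k_{m+1}^2$, collapsing the behavior Euler step to $a+k_m z$ and identifying that piece of $L_{CD}$ with $L_{CT}(\theta,\theta^-)$ up to $o(\Delta k)$. This uses assumption~1 (twice differentiability of $d$ and $f_{\theta^-}$ with bounded second derivatives) and assumption~2 (that $s_{\phi^*}$ is the exact score, so the solver target is exact).

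The core of the argument is the $\nabla Q$ displacement. Expanding $f_{\theta^-}$ to first order about the behavior endpoint and expanding the squared Euclidean distance $d$, the cross term becomes an inner product against $\nabla Q(s,a^{k_{m+1}})$ weighted by $\tfrac{k_{m+1}^2}{\lambda}$, while the quadratic Taylor remainders and the $\|\cdot\|^2$ of the displacement are $O((\Delta k)^2)$. I would then apply the first-order identity $\nabla Q(s,a^{k_{m+1}})\cdot(a^{k_{m+1}}-a)=Q(s,a^{k_{m+1}})-Q(s,a)+O(\|a^{k_{m+1}}-a\|^2)$ — valid to the needed order by assumption~1 on $Q$ — to trade the gradient for the scalars $Q(s,a)$ and $Q(s,a^{k_{m+1}})$, which is precisely what produces the factor $\tfrac{k_{m+1}^2}{\lambda(a^{k_{m+1}}-a)^2}$. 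Invoking the consistency property of $f_{\theta^-}$ (it maps any point of the PF ODE at level $k_{m+1}$ to approximately the sampled action $\hat a^{\epsilon}$, so $Q(s,a^{k_{m+1}})$ may be exchanged for $Q(s,\hat a^{\epsilon})$ inside this term) regroups the surviving $O(1)$ contributions into (i) the $Q(s,a)$-reweighting of $L_{CT}$, i.e.\ the coefficient $\alpha$ of Eq.~(\ref{eq:alpha beta}), and (ii) the term $-\beta\,\mathbb E[Q(s,\hat a^{\epsilon})]$, with $\beta$ picking up the residual self-consistency gap $d(f_\theta(a+k_{m+1}z,k_{m+1}|s),f_{\theta^-}(a+k_{m+1}z,k_{m+1}|s))$ times $\tfrac{k_{m+1}^2}{\lambda(a^{k_{m+1}}-a)^2}$.

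Finally I would collect all remainders — the single-sample substitution error, the Taylor remainders, and the exchange of $f_{\theta^-}$-evaluations for $\hat a^{\epsilon}$ — and show each is $o(\Delta k)$, using assumption~3 (Lipschitzness of $f_\theta$) together with the bounded second derivatives to keep the bounds uniform over the partition; this yields Eq.~(\ref{eq:final loss}). The main obstacle I anticipate is exactly this last bookkeeping together with the regrouping in the previous paragraph: deciding which inner-product pieces are absorbed into $\alpha L_{CT}$ versus $\beta\,\mathbb E[Q(s,\hat a^{\epsilon})]$, justifying the exchange of the consistency-map evaluations for $\hat a^{\epsilon}$ at the correct order, and certifying that every discarded term is genuinely $o(\Delta k)$ rather than merely $O(\Delta k)$.
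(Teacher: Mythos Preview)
Your overall skeleton matches the paper's: decompose the optimal score via Eq.~(\ref{eq:optimal score}), Taylor-expand $L_{CD}$ in $\Delta k$, identify the behavior-score piece with $L_{CT}(\theta,\theta^-)+o(\Delta k)$ by invoking the consistency-training reduction of \cite{song2023consistency}, and trade $\nabla Q(s,a^{k_{m+1}})$ for a difference of $Q$-values via a first-order Taylor identity. The gap is in how you then dispose of $Q(s,a^{k_{m+1}})$. Your proposed mechanism --- ``the consistency property of $f_{\theta^-}$ maps any point at level $k_{m+1}$ to $\hat a^{\epsilon}$, so $Q(s,a^{k_{m+1}})$ may be exchanged for $Q(s,\hat a^{\epsilon})$'' --- does not hold: the consistency map sends $a^{k_{m+1}}\mapsto f_{\theta^-}(a^{k_{m+1}},k_{m+1}|s)\approx\hat a^{\epsilon}$, but the $Q$ in that cross term is evaluated at the \emph{raw noisy action} $a^{k_{m+1}}$, not at its image under $f_{\theta^-}$. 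For $k_{m+1}$ bounded away from $\epsilon$ the point $a^{k_{m+1}}$ is far from any clean action and there is no control on $Q(s,a^{k_{m+1}})-Q(s,\hat a^{\epsilon})$. Even granting your substitution, the resulting coefficient $Q(s,\hat a^{\epsilon})-Q(s,a)$ does not split into the stated $\alpha$ (which carries $Q(s,a)$ alone) times $L_{CT}$ and a separate $\beta\,\mathbb E[Q(s,\hat a^{\epsilon})]$.

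The paper removes $Q(s,a^{k_{m+1}})$ by a different device you do not mention: it interprets the intermediate-time $Q$ as the diffused energy in the sense of QGPO \cite{lu2023contrastive}, namely $Q(s,a^{k_m}):=\log\mathbb E_{p_{\mathrm{data}}(a\mid a^{k_m})}\bigl[e^{\beta Q(s,a)}\bigr]$, and argues this equals $\log\int\pi(a)\,da=0$. With $Q(s,a^{k_{m+1}})=0$ the guidance term $\mathcal B$ collapses to a pure multiple of $Q(s,a)$; a reverse Taylor expansion then reconstructs the two distance terms $d(f_\theta,f_{\theta^-}(a^{k_m},k_m))$ and $d(f_\theta,f_{\theta^-}(a^{k_{m+1}},k_{m+1}))$, the first combining with $\mathcal A$ to give $\alpha\,L_{CT}$, and only in the second is the substitution $a\approx\hat a^{\epsilon}$ invoked to produce $-\beta\,\mathbb E[Q(s,\hat a^{\epsilon})]$. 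You need this (or an equivalent) zeroing of the intermediate energy for the regrouping to close; the consistency property of $f_{\theta^-}$ alone will not deliver it.
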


\begin{proof}
    The Lemma 1 in Appendix A.3 of \cite{song2023consistency} provides a theoretical justification for the distillation loss and the consistency training loss. The expression of the score function can be further simplified to yield:
    \begin{equation}
        \label{eq:cm score}
        \nabla\log p^k(a^k)=-\mathbb{E}[\frac{a^k-a}{k^2}|a^k]
    \end{equation}
    where $k \in [\epsilon,K]$. 
    
    According to Eq. ({\ref{eq:cm score}}) and Eq. (\ref{eq:optimal policy}) claimed in Section \ref{sec: offline rl}, we can derive the following expression of the score function for the consistency policy:
    \begin{equation}
    \begin{aligned}
        \nabla\log \pi^k(a^k) = \nabla\log \mu_{k}(a^k)+\nabla \frac{1}{\lambda}Q(s,a^k)= -\mathbb{E}[\frac{a^k-a}{k^2}|a^k]+\nabla \frac{1}{\lambda}Q(s,a^k)
    \end{aligned}
    \end{equation}
    
    With Taylor expansion, we have:
    \begin{equation}
    \begin{aligned}
        L_{CD}&(\theta,\theta_{-};\phi^*)=\mathbb{E}[d(f_{\theta}(a+k_{m+1}z,k_{m+1}|s),f_{\theta^{-}}(\hat{a}_{\phi^*}^{k_m},k_m|s))]\\
        =&\mathbb{E}[d(f_{\theta}(a^{k_{m+1}},k_{m+1}|s),f_{\theta^-}(a^{k_{m+1}}+(k_{m+1}-k_m)k_{m+1}\nabla\log \pi_{k_{m+1}}(a^{k_{m+1}}),k_m|s))]\\
        =&\mathbb{E}[d(f_{\theta}(a^{k_{m+1}},k_{m+1}|s),f_{\theta^-}(a^{k_{m+1},k_{m+1}}|s)\\
        &+\partial_1f_{\theta^-}(a^{k_{m+1}},k_{m+1}|s)\nabla \pi_{k_{m+1}}(a^{k_{m+1}})+\partial_2f_{\theta^-}(a^{k_{m+1},k_{m+1}}|s)(k_m-k_{m+1}))+o(|k_{m+1}-k_m|)]\\
        =&\mathbb{E}[d(f_{\theta}(a^{k_{m+1}},k_{m+1}|s),f_{\theta^-}(a^{k_{m+1}},k_{m+1}|s))\\
        &+\partial_2d(f_{\theta}(a^{k_{m+1}},k_{m+1}|s),f_{\theta^-}(a^{k_{m+1}},k_{m+1}|s))[\partial_1f_{\theta^-}(a^{k_{m+1}},k_{m+1}|s)\nabla \pi_{k_{m+1}}(a^{k_{m+1}})\\
        &+\partial_2f_{\theta^-}(a^{k_{m+1}},k_{m+1}|s)(k_m-k_{m+1})]+o(|k_{m+1}-k_m|))]\\
        =&\mathbb{E}[d(f_{\theta}(a^{k_{m+1}},k_{m+1}|s),f_{\theta^-}(a^{k_{m+1}},k_{m+1}|s))]\\
        &+\mathbb{E}[\partial_2d(f_{\theta}(a^{k_{m+1}},k_{m+1}|s),f_{\theta^-}(a^{k_{m+1}},k_{m+1}|s))[\partial_1f_{\theta^-}(a^{k_{m+1}},k_{m+1}|s)(k_{m+1}-k_m)k_{m+1}\nabla \pi_{k_{m+1}}(a^{k_{m+1}})]]\\
        &+\mathbb{E}[\partial_2d(f_{\theta}(a^{k_{m+1}},k_{m+1}|s),f_{\theta^-}(a^{k_{m+1}},k_{m+1}|s))[\partial_2f_{\theta^-}(a^{k_{m+1}},k_{m+1}|s)(k_m-k_{m+1})]+o(|k_{m+1}-k_m|)]\\
        =& \mathcal A + \mathcal B
    \end{aligned}
    \end{equation}
    
    \noindent where
    \begin{equation}
    \begin{aligned}
        &\mathcal A=\mathbb{E}[d(f_{\theta}(a^{k_{m+1}},k_{m+1}|s),f_{\theta^-}(a^{k_{m+1}},k_{m+1}|s))]\\
        &+\mathbb{E}[\partial_2d(f_{\theta}(a^{k_{m+1}},k_{m+1}|s),f_{\theta^-}(a^{k_{m+1}},k_{m+1}|s))[\partial_1f_{\theta^-}(a^{k_{m+1}},k_{m+1}|s)(k_m-k_{m+1})k_{m+1}\mathbb{E}[\frac{a^{k_{m+1}}-a}{k^2}|a^{k_{m+1}}]]\\
        &+\mathbb{E}[\partial_2d(f_{\theta}(a^{k_{m+1}},k_{m+1}|s),f_{\theta^-}(a^{k_{m+1}},k_{m+1}|s))[\partial_2f_{\theta^-}(a^{k_{m+1}},k_{m+1}|s)(k_m-k_{m+1}))]]+\mathbb{E}[o(|k_{m+1}-k_m|)]\\
    \end{aligned}
    \end{equation}
    
    \noindent and
    \begin{equation}
    \begin{aligned}
        \label{eq:termB}
        &\mathcal B=\mathbb{E}[\partial_2d(f_{\theta}(a^{k_{m+1}},k_{m+1}|s),f_{\theta^-}(a^{k_{m+1}},k_{m+1}|s))[\partial_1f_{\theta^-}(a^{k_{m+1}},k_{m+1}|s)(k_{m+1}-k_m)k_{m+1}\frac{1}{\lambda}\nabla Q(s,a^{k_{m+1}})]]\\
    \end{aligned}
    \end{equation}
    
    The Theorem 2 in \cite{song2023consistency} has proofed that:
    \begin{equation}
    \begin{aligned}
        &\mathcal A=L_{CT}(\theta,\theta^-)+o(\Delta k)\\
    \end{aligned}
    \end{equation}
    
    We will then focus on the term $\mathcal B$. With Taylor expansion, we have:
    \begin{equation}
    \begin{aligned}
        Q(s,a)=Q(s,a^{k_{m+1}})+\partial_1Q(s,a^{k_{m+1}})(a-a^{k_{m+1}})\\
    \end{aligned}
    \end{equation}

    Thus we have:
    \begin{equation}
    \begin{aligned}
    \label{eq:deltaQ}
        \partial_1Q(s,a^{k_{m+1}})=\frac{Q(s,a)-Q(s,a^{k_{m+1}})}{a-a^{k_{m+1}}}\\
    \end{aligned}
    \end{equation}
    
    According to Eq. (\ref{eq:termB}) and Eq. (\ref{eq:deltaQ}), we can further derive the following relation:
    \begin{equation}
    \begin{aligned}
        \mathcal B=&\frac{1}{\lambda}\mathbb{E}[\partial_2d(f_{\theta}(a^{k_{m+1}},k_{m+1}|s),f_{\theta^-}(a^{k_{m+1}},k_{m+1}|s))[\partial_1f_{\theta^-}(a^{k_{m+1}},k_{m+1}|s)(k_{m+1}-k_m)k_{m+1}\frac{Q(s,a)-Q(s,a^{k_{m+1}})}{a-a^{k_{m+1}}}]]\\
        \overset{(i)}{=}&\frac{1}{\lambda}\mathbb{E}[\partial_2d(f_{\theta}(a^{k_{m+1}},k_{m+1}|s),f_{\theta^-}(a^{k_{m+1}},k_{m+1}|s))[\partial_1f_{\theta^-}(a^{k_{m+1}},k_{m+1}|s)(k_{m+1}-k_m)\frac{Q(s,a^{k_{m+1}})-Q(s,a)}{z}]]\\
        =&-\frac{1}{\lambda}\mathbb{E}[\partial_2d(f_{\theta}(a^{k_{m+1}},k_{m+1}),f_{\theta^-}(a^{k_{m+1}},k_{m+1}|s))[\partial_1f_{\theta^-}(a^{k_{m+1}},k_{m+1}|s)(k_m-k_{m+1})z(Q(s,a^{k_{m+1}})-Q(s,a))\frac{k_{m+1}^2}{(a^{k_{m+1}}-a)^2}]]\\
    \end{aligned}
    \end{equation}
    
    \noindent where $(i)$ is due to $a^{k_{m+1}}\sim \mathcal{N}(a,k_{m+1}^2I)$. Then we use the Taylor expansion in the reverse direction to obtain:        
    \begin{equation}
    \begin{aligned}
        \mathcal B=&\frac{1}{\lambda}\mathbb{E}[(d(f_{\theta}(a^{k_{m+1}},k_{m+1}|s),f_{\theta^-}(a^{k_{m+1}},k_{m+1}|s))\\
        &-d(f_{\theta}(a^{k_{m+1}},k_{m+1}|s),f_{\theta^-}(a^{k_{m+1}}+(k_m-k_{m+1})z,k_{m+1}|s)))(Q(s,a^{k_{m+1}})-Q(s,a))\frac{k_{m+1}^2}{(a^{k_{m+1}}-a)^2}]\\
        =&\frac{1}{\lambda}\mathbb{E}[(d(f_{\theta}(a^{k_{m+1}},k_{m+1}|s),f_{\theta^-}(a^{k_{m+1}},k_{m+1}|s))-d(f_{\theta}(a^{k_{m+1}},k_{m+1}),f_{\theta^-}(a+k_mz,k_{m+1}|s)))(Q(s,a^{k_{m+1}})-Q(s,a))\frac{k_{m+1}^2}{(a^{k_{m+1}}-a)^2}]\\
        \overset{(ii)}{\approx}&\frac{1}{\lambda}\mathbb{E}[(d(f_{\theta}(a^{k_{m+1}},k_{m+1}|s),f_{\theta^-}(a^{k_{m+1}},k_{m+1}|s))-d(f_{\theta}(a^{k_{m+1}},k_{m+1}),f_{\theta^-}(a^{k_m},k_{m}|s)))(Q(s,a^{k_{m+1}})-Q(s,a))\frac{k_{m+1}^2}{(a^{k_{m+1}}-a)^2}]\\
    \end{aligned}
    \end{equation}
    
    \noindent where $(ii)$ takes the approximation of $k_m\approx k_{m+1}$, which will result in an error bounded by $L||k_{m+1}-k_m||_2$. 
    
    Following \cite{lu2023contrastive}, we could have:
    \begin{equation}
    \begin{aligned}
        Q(s,a^{k_m})&=\log \mathbb{E}_{p_{data}(a|a^{k_m})}[e^{\beta Q(s,a)}]\\
        &=\log\int \mu(a)e^{Q(s,a)}da\\
        &=\log\int \pi(a)da\\
        &=0
    \end{aligned}
    \end{equation}
    
    Then we have:
    \begin{equation}
    \begin{aligned}
        \mathcal B=-\frac{1}{\lambda}\mathbb{E}[(d(f_{\theta}(a^{k_{m+1}},k_{m+1}|s),f_{\theta^-}(a^{k_{m+1}},k_{m+1}|s))-d(f_{\theta}(a^{k_{m+1}},k_{m+1}|s),f_{\theta^-}(a^{k_m},k_{m}|s)))Q(s,a)\frac{k_{m+1}^2}{(a^{k_{m+1}}-a)^2}]\\
    \end{aligned}
    \end{equation}

    Thus, we can derive that: 
    \begin{equation}
    \begin{aligned}
        \mathcal A + \mathcal B =&\mathbb{E}[d(f_{\theta}(a^{k_{m+1}},k_{m+1}|s),f_{\theta^-}(a^{k_m},k_{m}|s))(1+Q(s,a)\frac{k_{m+1}^2}{\lambda(a^{k_{m+1}}-a)^2})\\
        &-d(f_{\theta}(a^{k_{m+1}},k_{m+1}|s),f_{\theta^-}(a^{k_{m+1}},k_{m+1}|s)) Q(s,a)\frac{k_{m+1}^2}{\lambda(a^{k_{m+1}}-a)^2})] + o(\Delta k)\\
        \overset{(iii)}{\approx}&\mathbb{E}[d(f_{\theta}(a^{k_{m+1}},k_{m+1}|s),f_{\theta^-}(a^{k_m},k_{m}|s))(1+Q(s,a)\frac{k_{m+1}^2}{\lambda(a^{k_{m+1}}-a)^2})\\
        &-d(f_{\theta}(a^{k_{m+1}},k_{m+1}),f_{\theta^-}(a^{k_{m+1}},k_{m+1}|s)) Q(s,\hat{a}^{\epsilon})\frac{k_{m+1}^2}{\lambda(a^{k_{m+1}}-a)^2})] + o(\Delta k)\\
    \end{aligned}
    \end{equation}
    where $(iii)$ takes the approximation of $a\approx\hat{a}^{\epsilon}$

    For the consistency model training, we define the hyperparameter:
    \begin{equation}
    \begin{aligned}
        \alpha=&\mathbb{E}[1+Q(s,a)\frac{k_{m+1}^2}{\lambda(a^{k_{m+1}}-a)^2}]
    \end{aligned}
    \end{equation}
    
    \noindent and
    \begin{equation}
    \begin{aligned}
        \beta&=\mathbb{E}[d(f_{\theta}(a^{k_{m+1}},k_{m+1}|s),f_{\theta^-}(a^{k_{m+1}},k_{m+1}|s))\frac{k_{m+1}^2}{\lambda(a^{k_{m+1}}-a)^2})]\\
    \end{aligned}
    \end{equation}

    Then we have:
    \begin{equation}
    \begin{aligned}
        L_{CD}(\theta,\theta^-;\phi^*)&=\mathcal A + \mathcal B=\alpha L_{CT}(\theta,\theta^-) -\beta\mathbb{E}[Q(s,\hat{a}^{\epsilon})] + o(\Delta k)
    \end{aligned}
    \end{equation}
    
    Note that the $a$ for calculating $L_{CT}(\theta,\theta^-)$ is sampled from the dataset, and $\hat{a}^{\epsilon}$ for the Q-function guidance term is generated from the consistency policy. The above proves the formulation of the final optimization objective.
\end{proof}

\subsection{The Proof of Reconstruction Loss}
\label{apx:recon loss}
% \begin{theorem}
    Let $\Delta k=\max_{m\in [1, M-1]}\{|k_{m+1}-k_m|\}$. We have the following assumptions:
    \begin{itemize}
        \item Assume we have a pre-trained score function represents the desired policy: $\forall k \in[\epsilon, K]: s_{\phi^*}(a^k,k)=\nabla\log p^k(a^k)$, which cannot be accessed. And let $f(a^k,k;\phi^*|s)$ be the ground truth conditional consistency policy of this score model.
    \end{itemize}
    Then, if $L_{RC}(\theta,\theta^{-})=0$, we have:
    \begin{center}
        $L_{CT}(\theta,\theta^{-})=L_{RC}(\theta,\theta^{-})$
    \end{center}
% \end{theorem}

\begin{proof}
    From $L_{RC}=0$, we have:
    \begin{equation}
        \mathbb{E}[d(f_{\theta}(a+k_{m+1}z,k_{m+1}|s),a)]  \equiv 0
    \end{equation}
    
    \noindent where the expectation is taken with respect to $a\sim p_{data}$, $m\sim \mathcal{U}[1, M-1]$, $z\sim \mathcal{N}(0,1)$. 
    
    Considering that $p(a)>0$ for any $a \in p_{data}$, it entails:
    \begin{equation}
        d(f_{\theta}(a+k_{m}z,k_{m}|s),a) \equiv 0
    \end{equation}
    
    \noindent for $m \in [1, M-1]$.

    With respect to $d(x,y)=0 \Leftrightarrow x-y=0$, we have:
    \begin{equation}
    \begin{aligned}
        f_{\theta}(a+k_{m+1}z,k_{m+1}|s) &\equiv a \\
        &\equiv f(a+k_{m+1}z,k_{m+1};\phi^*|s)\\
        &= f(a+k_{m}z,k_{m};\phi^*|s)
    \end{aligned}
    \end{equation}

    Then we have:
    \begin{equation}
    \begin{aligned}
        L_{CT}(\theta,\theta^{-}) &= \mathbb{E}[d(f_{\theta}(a+k_{m+1}z,k_{m+1}|s),f_{\theta^{-}}(a+k_{m}z,k_{m}|s))]\\
        &= \mathbb{E}[d(f(a+k_{m}z,k_{m};\phi^*|s),f_{\theta^{-}}(a+k_{m}z,k_{m}|s))]\\
        &= \mathbb{E}[d(f_{\theta^{-}}(a+k_{m}z,k_{m}|s), a)]\\
        &= L_{RC}(\theta,\theta^{-})
    \end{aligned}
    \end{equation}
\end{proof}

\section{Consistency Policy for Offline RL and Online RL}
\label{apx: policy}
We provide an illustrative description of how CPQL works for both offline and online RL tasks, as shown in Figure \ref{fig:rl goal}. One one hand, previously used unimodal policy, such as Gaussian policy $\pi_g$, has to make the trade-off between better performance and diverse behaviors. The Gaussian policy learns with extra exploration policy $\pi_e$ for online tasks and struggles with constraints from behavior policy $\pi_b$ for offline tasks. On the other hand, the consistency policy $\pi_{cp}$ has a powerful ability to represent complex data distribution. The policy $\mu$ refers to the behavior policy under offline scenario while representing the policy $\pi_r$ for collecting the data in replay buffer $\mathcal{D}_{r}$ under online scenario. This consistency policy concurrently handles greedy behavior and diverse exploratory behaviors or policy constraints in a seamless manner.

\begin{figure*}[t]
    \centering
    \includegraphics[width=16cm]{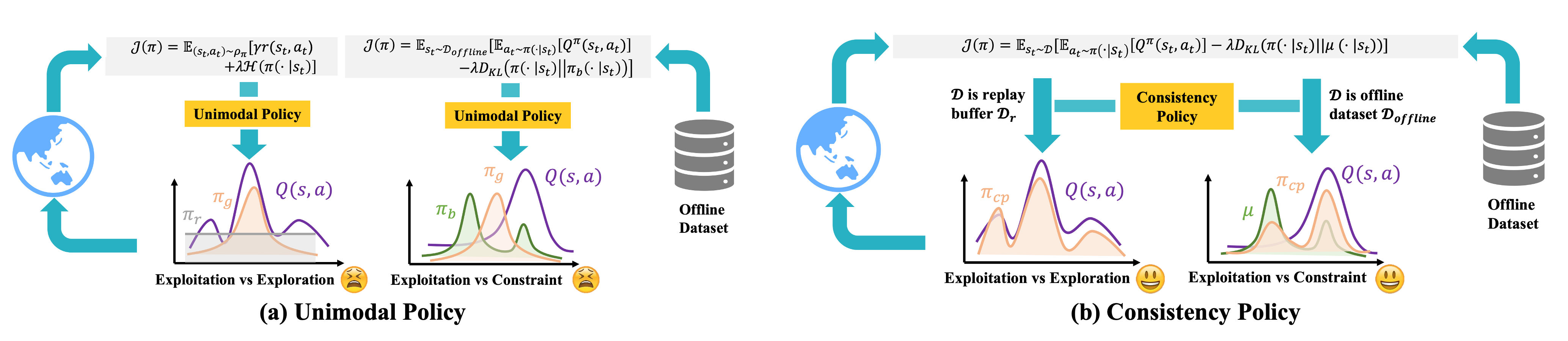}
    \caption{Illustration of online and offline RL with (a) previously used unimodal policy and (b) our proposed consistency policy. 
    }
\label{fig:rl goal}
\end{figure*}

\section{Experiment Setup}
\label{apx: exp setup}
In this section, we provide the setup for all experiments tested in the paper and the source code we used for reporting the results of other methods, such as TD3, SAC, PPO, DIPO, and Diffusion-QL. 

The consistency policy is based on a multilayer perceptron (MLP) that takes the state as input and provides the corresponding action as output. For the policy networks, we establish the backbone network architecture to be a 3-layer MLP (hidden size 256) with the Mish function serving as the activation function. The value networks also consist of a 3-layer MLP with Mish function, while additionally using LayNorm between each layer. 

\begin{table}[ht]
  \setlength{\abovecaptionskip}{0.2cm}
  \centering
  \caption{The hyperparameters of all selected tasks for CPQL and CPIQL.}
  \scalebox{1}{
  \begin{tabular}{lcccc}             \\
    \toprule
    Tasks                         & $\alpha$ & $\eta$ & $\tau$ & learning rate   \\
    \bottomrule
    \toprule
    All online tasks              & 0.05     & 1       &-      & $3*10^{-4}$\\
    \bottomrule
    \toprule
    halfcheetah-medium-v2         & 1        & 2.0     & 0.7   & $3*10^{-4}$\\
    hopper-medium-v2              & 1        & 2.0     & 0.6   & $3*10^{-4}$\\ 
    walker2d-medium-v2            & 1        & 1.0     & 0.6   & $3*10^{-4}$\\ 
    halfcheetah-medium-replay-v2  & 1        & 3.0     & 0.7   & $3*10^{-4}$\\ 
    hopper-medium-replay-v2       & 1        & 1.0     & 0.6   & $3*10^{-4}$\\ 
    walker2d-medium-replay-v2     & 1        & 1.0     & 0.6   & $3*10^{-4}$\\ 
    halfcheetah-medium-expert-v2  & 1        & 1.0     & 0.7   & $3*10^{-4}$\\ 
    hopper-medium-expert-v2       & 1        & 0.5     & 0.6   & $3*10^{-4}$\\ 
    walker2d-medium-expert-v2     & 1        & 1.0     & 0.6   & $3*10^{-4}$\\ 
    \bottomrule
    \toprule
    pen-human-v1                  & 1        & 0.15    & 0.7   & $3*10^{-5}$\\ 
    pen-cloned-v1                 & 1        & 0.1     & 0.7   & $3*10^{-5}$\\  
    \bottomrule
  \end{tabular}
  }
  \label{tab:hyperparameters}
\end{table}

For the consistency policy, we set $k \in [0.002, 80.0]$ and the number of sub-intervals $M=40$. Following Karras diffusion model \cite{karras2022elucidating}, the sub-interval boundaries are determined with the formula $k_i=(\epsilon^{\frac{1}{\rho}}+\frac{i-1}{M-1}(T^{\frac{1}{\rho}}-\epsilon^{\frac{1}{\rho}}))^{\rho}$, where $\rho=7$. And we use the Euler solver as the ODE solver. For the offline scenario, we train for 1000 epochs (2000 for Locomotion tasks), each consisting of 1000 gradient steps with batch size 256 on \textsf{D4RL} tasks. For the online scenario, we train for 1M iterations on \textsf{Gym MuJoCo} tasks and 500k iterations on \textsf{dm\_control} tasks. For learning rate, we set $3*10^{-5}$ for Adroit tasks of \textsf{D4RL} tasks and $3*10^{-5}$ for all other tasks. And we use a fixed learning rate, $3*10^{-4}$, for Q-function networks. Considering the $\alpha$ and $\eta$, for online tasks such as \textsf{Gym MuJoCo} tasks and \textsf{dm\_control} tasks, we set $\alpha=0.05$ and $\eta=1$. For offline tasks such as \textsf{D4RL} tasks, we set $\alpha=1$ and give the optimal $\eta$ for different tasks. Typically for CPIQL on offline tasks, we provide the hyperparameter $\tau$ for the expectile regression function for different tasks.

The above-mentioned hyperparameters we used for different tasks are shown in Table \ref{tab:hyperparameters}. And the Python codes for reporting the results of TD3, SAC, PPO, DIPO, EDP, and Diffusion-QL are the following:
\begin{itemize}
    \item TD3: \url{https: //github.com/sfujim/TD3};
    \item SAC: \url{https://github.com/toshikwa/soft-actor-critic.pytorch};
    \item PPO: \url{https://github.com/ikostrikov/pytorch-a2c-ppo-acktr-gail};
    \item DIPO: \url{https://github.com/BellmanTimeHut/DIPO};
    \item EDP: \url{https://github.com/sail-sg/edp};
    \item Diffusion-QL: \url{https://github.com/Zhendong-Wang/Diffusion-Policies-for-Offline-RL}.
\end{itemize}

\begin{table}[ht]
  \setlength{\abovecaptionskip}{0.2cm}
  \centering
  \caption{The performance of Diffusion-QL, CPQL and CPIQL on \textsf{D4RL} Locomotion and Adroit tasks. For CPQL and CPIQL, we report the mean and standard deviation of normalized scores across five random seeds. The results of Diffusion-QL(best) are from the paper of Diffusion-QL \cite{wang2022diffusion}, and the results of Diffusion-QL(final) are from the paper of QGPO \cite{lu2023contrastive}.}
  \scalebox{1}{
  \begin{tabular}{lcc|cc|cc}
    % \multicolumn{2}{c}{Part}                   \\
    \toprule
    Dataset                      &Diffusion-QL(best) &Diffusion-QL(final) &CPIQL(best)              &CPIQL(final)    &CPQL(best)               &CPQL(final)      \\
    \bottomrule
    \toprule
    halfcheetah-medium-v2        &51.1               &50.6                &55.3 $\pm$ 0.4           &54.6 $\pm$ 1.0  &\textbf{57.9 $\pm$ 0.3}  &56.9 $\pm$ 0.9       \\
    hopper-medium-v2             &90.5               &82.4                &101.5 $\pm$ 2.1          &99.7 $\pm$ 2.0  &\textbf{102.1 $\pm$ 2.4} &99.9 $\pm$ 4.5      \\
    walker2d-medium-v2           &87.0               &85.1                &88.4 $\pm$ 0.5           &86.2 $\pm$ 0.6  &\textbf{90.5 $\pm$ 2.2}  &82.1 $\pm$ 2.4       \\
    halfcheetah-medium-replay-v2 &47.8               &47.5                &\textbf{49.8 $\pm$ 1.0}  &48.0 $\pm$ 1.4  &48.1 $\pm$ 0.7           &46.6 $\pm$ 0.8       \\
    hopper-medium-replay-v2      &101.3              &100.7               &\textbf{101.7 $\pm$ 0.6} &100.6 $\pm$ 1.5 &\textbf{101.7 $\pm$ 0.9} &97.7 $\pm$ 4.6     \\
    walker2d-medium-replay-v2    &\textbf{95.5}      &94.3                &95.0 $\pm$ 0.6           &91.8 $\pm$ 2.8  &94.4 $\pm$ 1.3           &93.6 $\pm$ 5.6      \\
    halfcheetah-medium-expert-v2 &96.8               &96.1                &90.2 $\pm$ 1.3           &81.0 $\pm$ 1.7  &\textbf{98.8 $\pm$ 0.4}  &97.8 $\pm$ 0.5      \\
    hopper-medium-expert-v2      &111.1              &110.7               &113.4 $\pm$ 0.8          &110.6 $\pm$ 1.4 &\textbf{114.2 $\pm$ 0.4} &110.4 $\pm$ 3.2      \\
    walker2d-medium-expert-v2    &110.1              &109.7               &112.3 $\pm$ 0.3          &110.9 $\pm$ 0.2 &\textbf{111.5 $\pm$ 0.1} &110.9 $\pm$ 0.1     \\
    \midrule
    \rowcolor{gray!20} Average   &87.9               &86.3                &89.7                     &87.0            &\textbf{91.0}            &88.4               \\
    \bottomrule
    \toprule
    pen-human-v1                 &72.8               &-                   &58.2 $\pm$ 2.7           &48.0 $\pm$ 8.5  &\textbf{89.3 $\pm$ 6.9}  &56.7 $\pm$ 4.9           \\
    pen-cloned-v1                &57.3               &-                   &77.4 $\pm$ 6.9           &57.4 $\pm$ 5.0  &\textbf{83.3 $\pm$ 6.1}  &65.3 $\pm$ 2.5           \\
    \midrule
    \rowcolor{gray!20} Average   &65.1               &-                   &67.8                     &52.7            &\textbf{86.3}            &61.0                  \\
    \bottomrule
  \end{tabular}
  }
  \label{tab:offline scores best final}
\end{table}

\begin{table*}[ht]
  \centering
  \caption{The original training IPS and sampling SPS of CPQL and other methods on \textsf{Gym MuJoCo} tasks using Nvidia 2080Ti. }
  \scalebox{1}{
  \begin{tabular}{lcccccccc|c}             \\
    \toprule
    -   & Diffusion-QL (PyTorch) & DIPO (PyTorch) & CPQL (PyTorch) & Diffusion-QL (Jax) & EDP (Jax) & CPQL (Jax)  \\
    \bottomrule
    \toprule
    IPS & 4.7                    & 8.2            & 68.0           & 22.3               & 116.2     & 325.2       \\
    SPS & 15.1                   & 17.0           & 680.2          & 123.7              & 411.9     & 1069.7      \\
    \bottomrule
  \end{tabular}
  }
  \label{tab:ips sps}
\end{table*}

\section{More Experiment Results}
\label{apx:more d4rl}
In this section, we provide more experiment results for \textsf{D4RL} tasks under the offline scenario, as shown in Table \ref{tab:offline scores best final}. Different from Table 3 in the paper, we here provide 2 columns (best and final) for CPQL and CPIQL, where "best" represents the best performance during training with online evaluation, and "final" refers to the performance at the end of training. In most cases, the final performance can reach the best performance unless the training is unstable, leading to performance collapse in the later training process. In this paper, we choose the "best" scores for method comparison.

Also, we provide the IPS for training and SPS for inference for chosen methods are provided in Table \ref{tab:ips sps}.

\section{Training Curves for Offline and Online Reinforcement Learning}
\label{apx:train curve}
In this section, we provide curves of the training process for each selected task mentioned in this paper. We plot the mean and standard deviation of results across five random seeds as shown in Figure \ref{fig:train curve dm control}, Figure \ref{fig:train curve mujoco} and Figure \ref{fig:train curve d4rl}.

\begin{figure*}[htbp]
    \centering 
    \includegraphics[width=14cm]{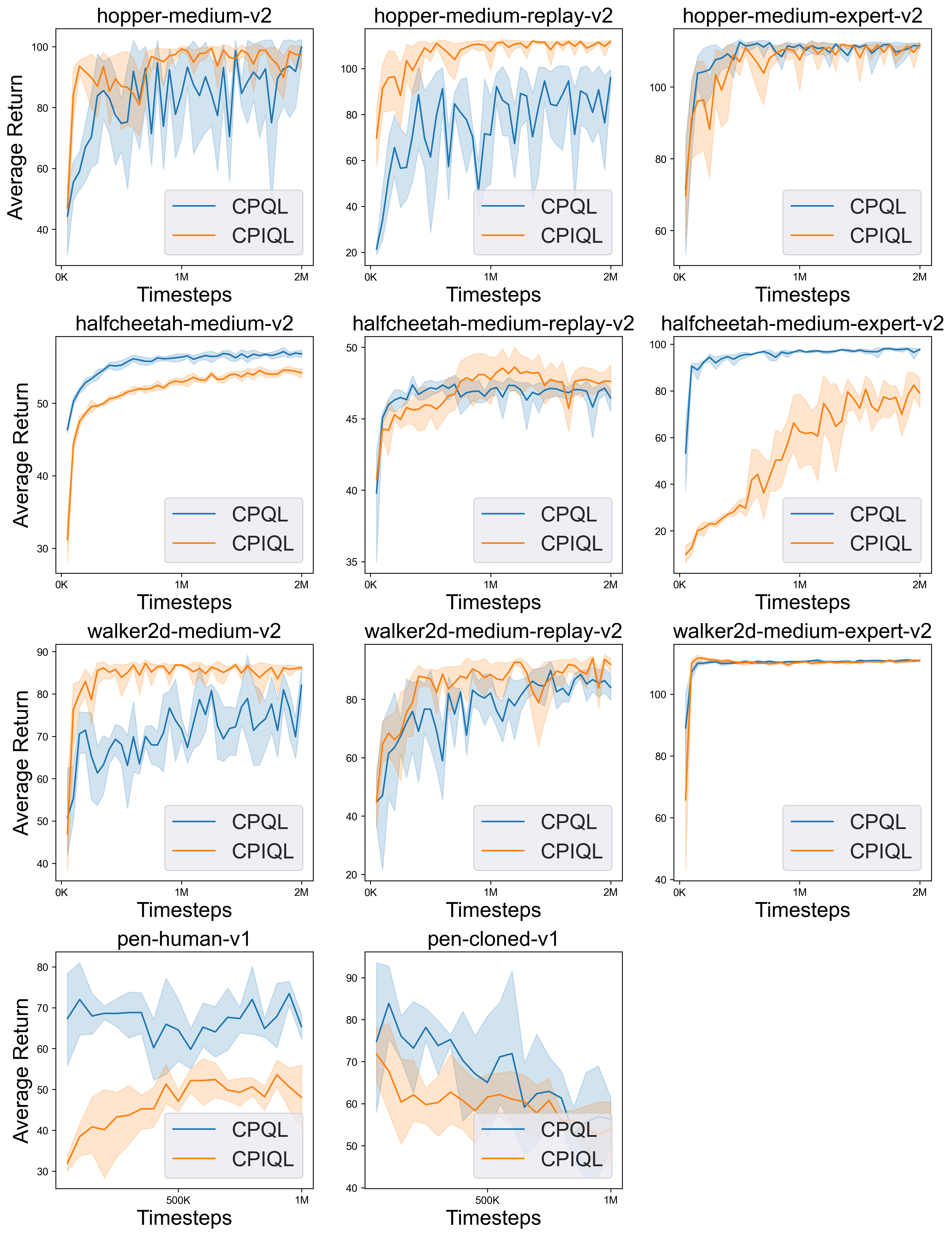}
    \caption{Training curves for \textsf{D4RL} tasks under the offline scenario.}
    \label{fig:train curve d4rl}
\end{figure*}

\begin{figure*}[htbp]
    \centering 
    \includegraphics[width=14cm]{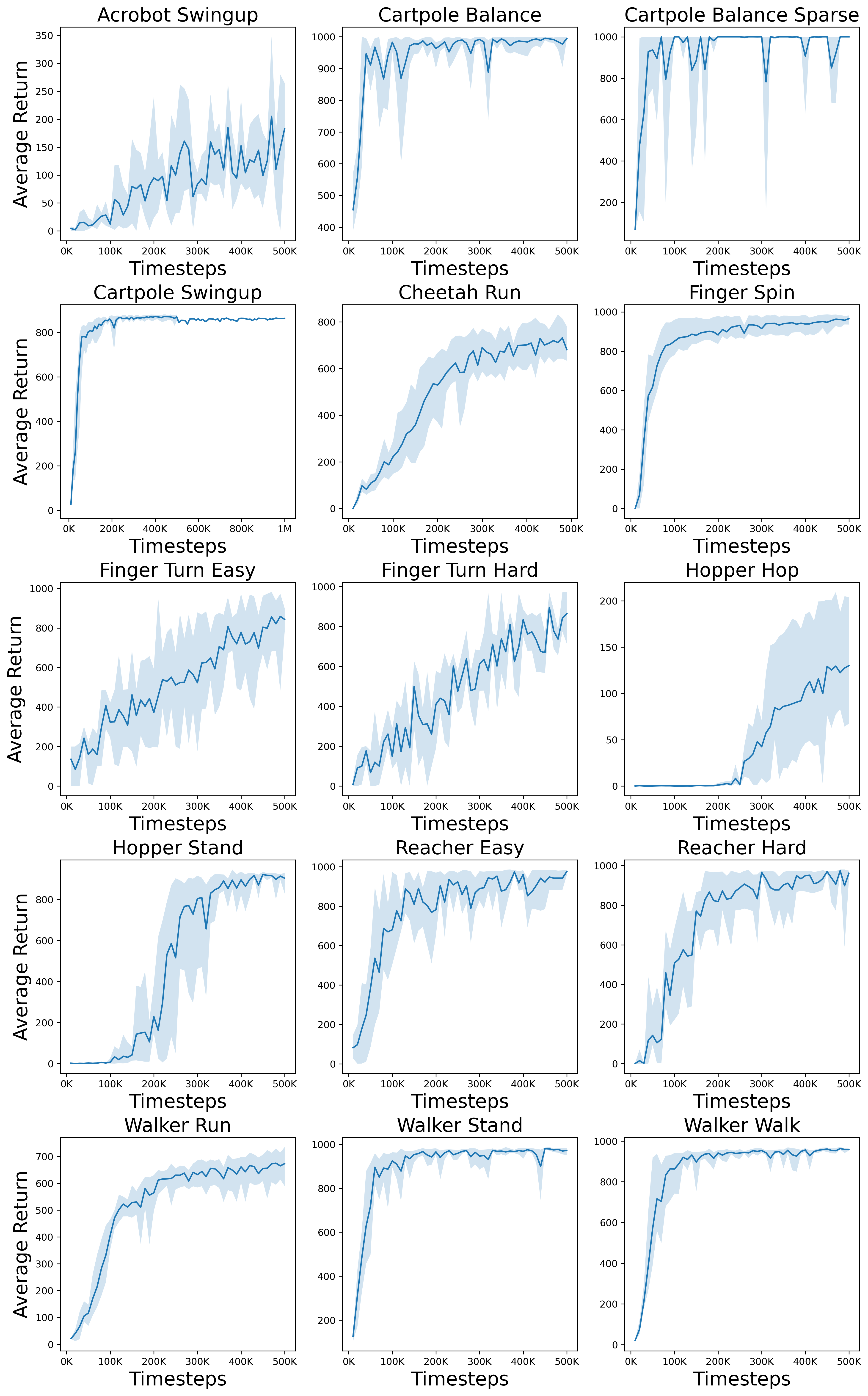}
    \caption{Training curves for \textsf{dm\_control} tasks under the online scenario}
    \label{fig:train curve dm control}
\end{figure*}

\begin{figure*}[htbp]
    \centering 
    \includegraphics[width=14cm]{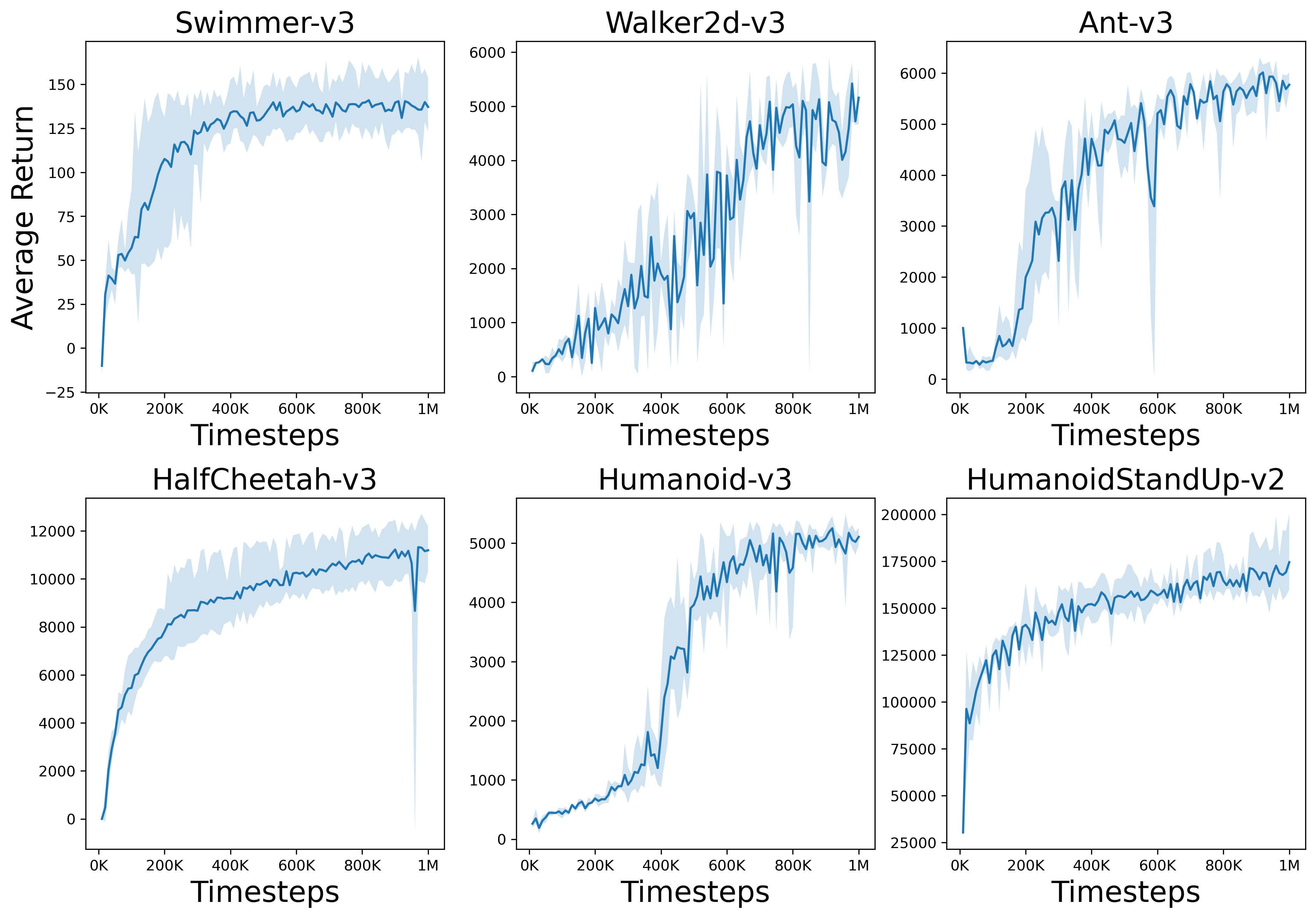}
    \caption{Training curves for \textsf{Gym MuJoCo} tasks under the online scenario.}
    \label{fig:train curve mujoco}
\end{figure*}

%%%%%%%%%%%%%%%%%%%%%%%%%%%%%%%%%%%%%%%%%%%%%%%%%%%%%%%%%%%%%%%%%%%%%%%%

\end{document}